\documentclass[accepted]{uai2026} 
                        
\usepackage[american]{babel}

\usepackage[round]{natbib} 
\usepackage{mathtools} 
\usepackage{amsthm}
\usepackage{amssymb}
\usepackage{dblfloatfix}
\newtheorem{theorem}{Theorem}[section]
\newtheorem{proposition}{Proposition}[section]
\newtheorem{lemma}{Lemma}[section]

\newtheorem{definition}{Definition}[section]

\usepackage{booktabs} 
\usepackage{tikz} 

\usepackage{cleveref}
\usepackage{todonotes}
\usepackage{subcaption}
\usepackage{comment}
\newcommand{\R}{\mathbb{R}}
\newcommand{\N}{\mathbb{N}}
\newcommand{\Z}{\mathbb{Z}}

\newcommand{\vx}{\mathbf{x}}

\newcommand{\vd}{\mathrm{d}}

\usepackage{bm} 

\newcommand{\compose}[1]{\overset{\circ}{#1}}

\title{Beyond Mixtures and Products for Ensemble Aggregation: \\ A Likelihood Perspective on Generalized Means}

\author[1]{\href{mailto:<raphael.razafindralambo@inria.fr>?Subject=Your paper}{Raphaël~Razafindralambo}{}}
\author[1]{\href{mailto:<remy.sun@inria.fr>?Subject=Your paper}{Rémy~Sun}}
\author[1]{\href{mailto:<frederic.precioso@univ-cotedazur.fr>?Subject=Your paper}{Frédéric~Precioso}}
\author[2]{\href{mailto:<damien.garreau@uni-wuerzburg.de>?Subject=Your paper}{Damien~Garreau}}
\author[1]{\href{mailto:<pierre-alexandre.mattei@inria.fr>?Subject=Your paper}{Pierre-Alexandre~Mattei}}
\affil[1]{%
    Université Côte d’Azur\\Inria\\CNRS\\I3S/LJAD\\Maasai\\Nice\\France
}
\affil[2]{%
    Julius-Maximilians-Universität Würzburg\\Institute for Computer Science / CAIDAS\\Würzburg, Germany
}

\begin{document}
\maketitle

\begin{abstract}

Density aggregation is a central problem in machine learning, for instance when combining predictions from a Deep Ensemble. The choice of aggregation remains an open question with two commonly proposed approaches being linear pooling (probability averaging) and geometric pooling (logit averaging). In this work, we address this question by studying the normalized generalized mean of order $r \in \R \cup \{-\infty,+\infty\}$ through the lens of log-likelihood, the standard evaluation criterion in machine learning. This provides a unifying aggregation formalism and shows different optimal configurations for different situations. We show that the regime $r \in [0,1]$ is the only range ensuring systematic improvements relative to individual distributions, thereby providing a principled justification for the reliability and widespread practical use of linear ($r=1$) and geometric ($r=0$) pooling. In contrast, we show that aggregation rules with $r \notin [0,1]$ may fail to provide consistent gains with explicit counterexamples. Finally, we corroborate our theoretical findings with empirical evaluations using Deep Ensembles on image and text classification benchmarks.
\end{abstract}

\section{Introduction} %

\begin{figure*}[t]
    \centering
    \includegraphics[width=1\linewidth]{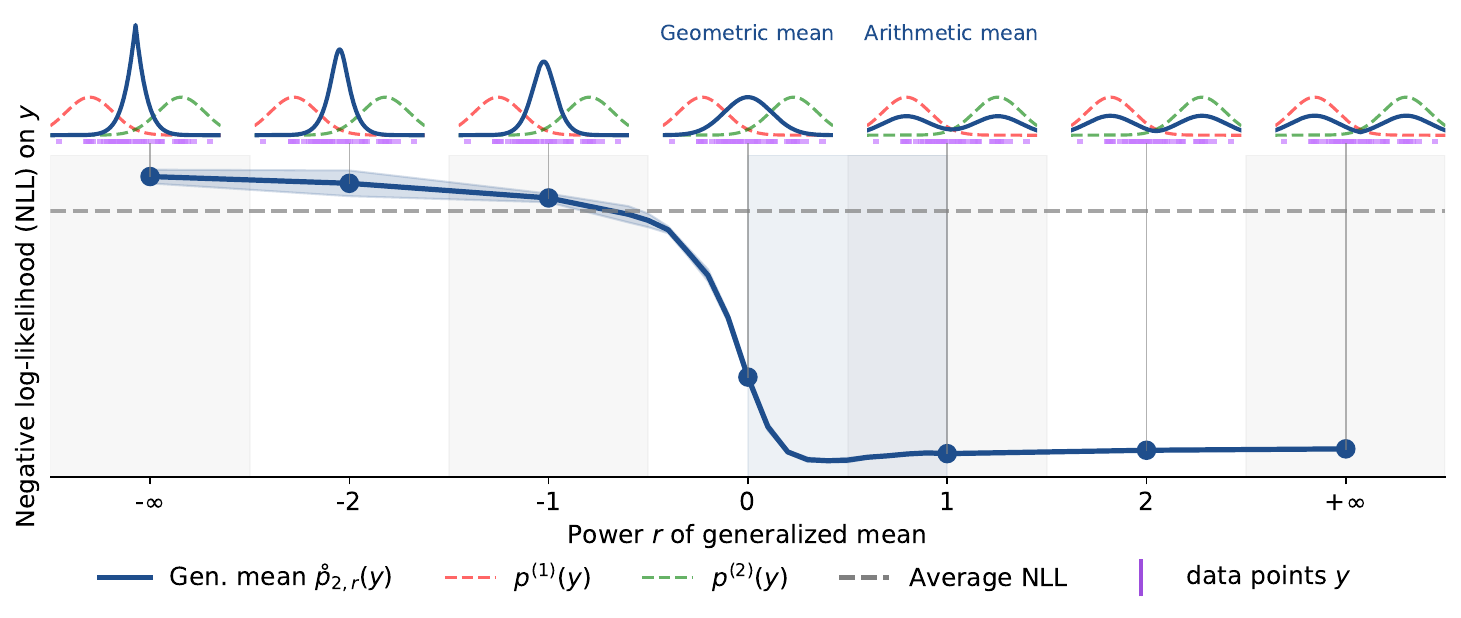}
    \caption{
    \textbf{Only intermediate powers $r$, particularly $r \in [0,1]$, yield consistent NLL improvements, whereas extreme values can degrade performance.}
    \textbf{Top:} aggregated densities $\compose{p}_2$ obtained from two Gaussian experts $p^{(1)}$ and $p^{(2)}$ for various $r$. Small $r$ concentrates mass in a single region, while large positive $r$ preserves the bimodal structure of the experts.
    \textbf{Bottom:} NLL evaluated on samples $y \sim \mathcal{N}(0,4^2)$ (purple ticks indicate test samples). In this setting, negative $r$ (min-type behavior) underperforms the average individual NLL (dashed line), whereas positive $r$ achieves the lowest values. Our theory (\Cref{theorem: log-likelihood inequality}) identifies $r \in [0,1]$ as the provably reliable interval. See Appendix~\ref{app: additional gaussian} for different behaviors. 
    }
    \label{fig:main figure}
\end{figure*}

Modern machine learning (ML) increasingly relies on collections of probabilistic models rather than a single monolithic predictor \citep{lakshminarayanan2017simple,liu2024deepseek}. This naturally raises the question of how to integrate several probability distributions into a single coherent distribution \citep{french1983group,genest1986combining}. 
Distribution integration arises in many settings in ML, including ensembling classifier outputs \citep[Section~5]{kuncheva2014combining}, Mixup training techniques \citep{el2025mixup}, Bayesian inference \citep{carvalho2023}, the aggregation of Gaussian process experts \citep{liu2018generalized,cohen2021healing}, mixture-of-experts architectures in large language models \citep{liu2024deepseek, cai2025survey}, and generative modeling \citep{liu2022compositional,razafindralambo2026two}. 

Two canonical approaches are particularly widespread. The first consists in forming a mixture of distributions \citep{jacobs1991adaptive,jordan1994hierarchical}, corresponding to an arithmetic average of densities. The second relies on multiplicative aggregation, where distributions are combined through a normalized product \citep{hinton2002training}. These methods offer contrasting properties  \citep{winkler1968consensus}: mixtures act as a logical `OR',  capturing heterogeneity by pooling the supports, whereas multiplicative models act as a logical `AND', sharpening the density where components overlap and thus favoring regions of consensus. This viewpoint has is recurrent in compositional generative modeling \citep{du2023reducereuserecyclecompositional,liu2022compositional}. More generally in ML, the choice between both has been a fundamental and practical question; for instance, \citet{buchanan2023effects} empirically compare logit and probability averaging in multi-class classification.

A key theoretical motivation for mixtures lies in their variance-reduction effect, allowing the collective evaluation to outperform the average individual ones (\citealp{hastie2009elements}, Chapter 15; \citealp{mattei2025ensembles}), a phenomenon related to the broader ``wisdom of crowds'' principle \citep{surowiecki2004wisdom}. When predictors are diverse, individual errors tend to compensate for one another, improving collective performance.

Beyond the classical arithmetic and geometric means, several works have sought to go beyond and generalize the notion of averaging probability distributions \citep{genest1986combining,amari2007integration} to reveal new aggregation properties and fundamental trade-offs underlying different pooling rules \citep{elkin2025opinion}. These efforts have led to continuous families of pooling operators that extend standard averages and include a broad spectrum of aggregation rules, ranging from harmonic and arithmetic means to more extreme operators such as max- or min-based pooling. In particular, \citet{amari2007integration} introduces an $\alpha$-family of operators parameterized by a free parameter $\alpha$, and establishes optimality properties with respect to $\alpha$-divergences for each member of this family.

We examine whether density aggregation through generalized means of order $r \in \overline{\R} \coloneqq \R \cup \{-\infty,+\infty\}$ \citep{hardy1952inequalities} provides improved aggregation rules for model ensembling. A natural criterion in ML to assess a aggregated distributions is the data log-likelihood, as it directly measures how well an aggregated model explains observed data. This objective is central to modern learning procedures, as it underlies cross-entropy loss minimization in supervised classification and KL-based objectives in broader settings such as generative modeling and variational inference (\citealp{bishop2006pattern}, Chapter 10; \citealp{theis2015note}; \citealp{goodfellow2016deep}, Chapter 5). This perspective therefore motivates both a theoretical and empirical investigation into whether variance reduction extends to generalized pooling operators and improve log-likelihood over individual models.

In this work, we show that our defined normalized generalized mean of order $r$ yields a reliable aggregation rule in terms of log-likelihood, with theoretical guarantees and empirical evidence that gains over individual distributions are consistently observed for $r \in [0,1]$. The idea is summarized in \Cref{fig:main figure}, which illustrates how the shape of normalized generalized mean density varies with $r$ along with the unreliable behavior of negative log-likelihood induced by negative $r$ values.

Our main contribution lies in analyzing, through the lens of log-likelihood, the generalized mean of order $r \in \overline{\R}$ which defines valid densities (\Cref{section: does every order r density}) with properties determined by $r$ (\Cref{section: background,section: aggregation of densities theory}). More precisely, this translates into the following results:
\begin{itemize}
\item We identify in \Cref{section: when provably reliable} a theoretical regime $r \in [0,1]$ ensuring consistent log-likelihood benefits for any set of densities, uniformly across all data points. This regime includes the arithmetic and geometric means as boundary cases, explaining their practical successes.
\item We characterize in \Cref{section: failure cases cex} failure points outside this interval based on agreement and disagreement of density values, highlighting distinct behaviors for $r<0$ and $r>1$. This explains the lack of reliability outside $[0,1]$ compared to the guarantees within the interval, as observed in \Cref{fig:main figure}. 
\item We empirically corroborate the theoretical findings through Deep Ensemble experiments on classification tasks on vision and sentiment analysis (\Cref{section: experiments}). We also observe that the optimal $r$ is non trivial but close to the $[0,1]$ range.
\end{itemize}

\section{Background and notations on generalized means}\label{section: background}
We consider a data space $\mathcal{X}$ equipped with a reference measure $\vd \vx$. We focus on probability distributions that admit densities with respect to $\vd \vx$. We denote by $\mathcal{P}$ the set of probability densities $p : \mathcal{X} \to \mathbb{R}_{\geq 0}$ satisfying $\int_{\mathcal{X}} p(\vx)\, \vd \vx = 1$.

Given $k \geq 1$ densities $p^{(1)}, \dots, p^{(k)} \in \mathcal{P}$, 
this work studies the construction of an aggregated density $\compose{p}$ obtained by combining these distributions through generalized averaging operators, followed in most of the cases by a normalization step to ensure that $\compose{p} \in \mathcal{P}$. 

We first introduce formal notation for the mixture and multiplicative aggregation, as they constitute the most popular reference aggregations for our results. We then turn to the generalized mean of order $r$ in $\overline{\R}$, which forms the central object of this paper and subsumes these operators.

\subsection{Canonical examples: linear and logarithmic pooling} \label{section: canonical examples}

We introduce below the two standard combinations for probability density functions $p^{(1)}$ and $p^{(2)}$ in $\mathcal{P}$, and recall a few of their basic properties.

The \emph{mixture} aggregation, also known as \emph{linear pooling / opinion pool} \citep{stone1961opinion,genest1986characterization,clemen1999combining, elkin2025opinion}, is defined for two probability as
\begin{equation}
\compose{p}_{\text{lin}}(\vx) \coloneqq \frac{1}{2}\Big(p^{(1)}(\vx) + p^{(2)}(\vx)\Big).
\end{equation}
This rule is highly democratic, as each density retains its influence in the aggregation, often resulting in a multimodal distribution. The mixture admits an appealing variational characterization \citep{amari2007integration}: it minimizes the average (forward) Kullback--Leibler divergence,
\begin{equation}\label{eq: mixture minimizes KL}
\compose{p}_{\text{lin}}
= \arg\min_{q \in \mathcal{P}} \frac{1}{2} \sum_{i=1}^2 \mathrm{KL}\!\left(p^{(i)} \,\|\, q\right).
\end{equation}

The \emph{multiplicative} aggregation is defined as the normalized product
\begin{equation}
\compose{p}_{\text{mult}}(\vx) \coloneqq \sqrt{p^{(1)}(\vx)\,p^{(2)}(\vx)}/Z,
\end{equation}
where $Z = \int_{\mathcal{X}} \sqrt{p^{(1)}(\vx)\,p^{(2)}(\vx)}\,\mathrm{d}\vx$ is the normalization constant. This operation is also commonly referred to as a \emph{Product-of-Experts} (PoE) \citep{hinton2002training,cohen2021healing, razafindralambo2026two}, 
but is also known in the literature as \emph{logarithmic pooling / opinion pool} \citep{carvalho2023,genest1986characterization,clemen1999combining, elkin2025opinion} or \emph{exponential mixture} \citep{amari2007integration}. The PoE is well-defined since the geometric mean of densities in $\mathcal{P}$ remains normalizable \citep{genest1986characterization,carvalho2023}.

The PoE enjoys several theoretical properties that sharply contrast with those of the mixture model. The PoE minimizes the average \emph{reverse} KL divergence to the individual models \citep{amari2007integration},
\begin{equation}\label{eq: PoE minimizes reverse KL}
\compose{p}_{\text{mult}}
= \arg\min_{q \in \mathcal{P}} \frac{1}{2} \sum_{i=1}^2 \mathrm{KL}\!\left(q \,\|\, p^{(i)}\right).
\end{equation} Moreover, it typically exhibits a more \emph{pessimistic} (and even dictatorial) behavior: regions assigned low density by any individual model are strongly penalized, and if $p^{(k)}(x)=0$ for some $k$, then $\compose{p}_{\text{mult}}(x)=0$ as well. As a result, PoE aggregations tend to produce distributions that are more concentrated, and often closer to unimodal compared to mixtures \citep{winkler1968consensus,genest1986combining}. \Cref{fig:main figure} provides an illustration (see $r=0$).

Other notable properties of the PoE include \emph{independence preservation} (\citealp{laddaga1977lehrer,cooke1991experts}, Chapter 11) and the \emph{externally} Bayesian property (\citealp{genest1986characterization,cooke1991experts}, Chapter 11). A more comprehensive overview of the properties for both linear and logarithmic pooling is also provided in \cite{elkin2025opinion} (Section 4).

Both aggregation schemes also have information-geometric interpretations: they correspond to the midpoints of two natural geodesics connecting $p^{(1)}$ and $p^{(2)}$ (see, e.g. \citealp{amari2016information}, Section 2.4).

\subsection{Generalized power mean} \label{section: generalized power mean}

We now discuss a generalized aggregation framework that extends the operators defined above. In this work, we adopt the generalized mean following the definition of \cite{hardy1952inequalities}, referred to as the \emph{ordinary mean}, and extend it to the setting of probability densities. Given $k$ positive values $a_1,\dots,a_k$, we define the \emph{generalized (or power) mean of order $r$} by
\begin{equation}
M_{r}(a_1,\dots,a_k) \coloneqq \left( \frac{1}{k} \sum_{i=1}^k a_i^r \right)^{1/r},
\quad r \in \mathbb{R} \setminus \{0\}.
\end{equation}
The continuous extension at $r = 0$ corresponds to the geometric mean, which naturally arises as the limit of $M_{r}$ when~$r$ tends to zero. We define
\begin{equation}
M_{0}(a_1,\dots,a_k) \coloneqq \left( \prod_{i=1}^k a_i \right)^{1/k}.
\end{equation}
The extreme cases recover the minimum and maximum values,
\begin{equation*}
M_{-\infty}(a_1,\dots,a_k) \coloneqq \lim_{r \to -\infty} M_{r}(a_1,\dots,a_k) = \min_i a_i,
\end{equation*}
\begin{equation*}\label{eq: Mr infty}
M_{+\infty}(a_1,\dots,a_k) \coloneqq \lim_{r \to +\infty} M_{r}(a_1,\dots,a_k) = \max_i a_i.
\end{equation*}

Classical choices recover well-known means: $r=1$ corresponds to the arithmetic mean (AM), $r=0$ to the geometric mean (GM), $r=-1$ to the harmonic mean. In fact the parameter $r$ controls the behavior of the generalized mean and can be interpreted as an optimism parameter. Large values of $r$ emphasize the largest inputs and lead to an optimistic aggregation, while small (negative) values emphasize the smallest inputs and result in a more pessimistic behavior. More precisely, larger values of $r$ make the mean a smooth approximation of the maximum, while smaller values make it a smooth approximation of the minimum. 

A key property of the generalized mean is its monotonicity with respect to the order $r$, which will play a central role in our analysis.

\begin{lemma}[\textbf{Power mean inequality, \citealp{hardy1952inequalities}; Chapter 2}]\label{lemma: p-mean inequality}
Let $a_1,\dots,a_k$ be positive values. If $r<s$, 
\begin{equation}
M_r(a_1,\dots,a_k) \leq M_s(a_1,\dots,a_k).
\end{equation}
where equality holds if and only if $a_1 = a_2 = \dots = a_k$. This result generalizes the inequality chain $\frac{a_1+a_2}{2} \geq \sqrt{a_1 a_2} \geq \frac{2}{\frac{1}{a_1} + \frac{1}{a_2}}$ where the left inequality is known as AM-GM.
\end{lemma}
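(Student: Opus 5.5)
The plan is to reduce everything to Jensen's inequality applied to a suitable power function, treating the finite orders $r,s$ first and then the cases involving $0$ and $\pm\infty$ by continuity or by direct arguments.

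\emph{Case $0<r<s$.} Set $b_i = a_i^r>0$ and $\varphi(t)=t^{s/r}$ on $(0,\infty)$. Since $s/r>1$, $\varphi$ is strictly convex. Jensen's inequality with uniform weights $1/k$ gives
\[
\Bigl(\tfrac1k\textstyle\sum_i b_i\Bigr)^{s/r} \le \tfrac1k\textstyle\sum_i b_i^{s/r} = \tfrac1k\textstyle\sum_i a_i^s .
\]
Taking the power $1/s>0$ preserves the inequality and yields $M_r\le M_s$. By strict convexity, equality forces all $b_i$ equal, hence all $a_i$ equal.

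\emph{Case $r<s<0$.} Now $s/r\in(0,1)$, so $\varphi$ is strictly concave and Jensen reverses. Raising to the negative power $1/s$ reverses it once more, giving the same conclusion with the same equality condition.

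\emph{Case $r<0<s$.} I would not try a single Jensen step here. Instead I would pass through $M_0$: for $s>0$, apply Jensen to the concave function $\log$ with $b_i=a_i^s$, which gives $\frac1k\sum_i s\log a_i \le \log\bigl(\frac1k\sum_i a_i^s\bigr)$, i.e.\ the weighted AM--GM inequality $M_0\le M_s$. For $r<0$, apply the same inequality to the reciprocals $1/a_i$ with exponent $-r>0$; this gives $M_0(1/a)\le M_{-r}(1/a)$, and inverting both sides yields $M_r(a)\le M_0(a)$. Chaining the two covers $r<0<s$, and also every case where $r$ or $s$ equals $0$. Equality again comes from the strictness of $\log$.

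\emph{Infinite orders.} The remaining cases involve $\pm\infty$. Directly, $M_r\le\max_i a_i$ because each $a_i^r\le(\max_j a_j)^r$ for $r>0$, with the analogous monotonicity reversal for $r<0$. Symmetrically, $M_s\ge\min_i a_i$. Equality in these bounds occurs only when all $a_i$ coincide.

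The main obstacle is purely bookkeeping: tracking the sign reversals when the exponents are negative, and handling the crossing of $0$. Routing through $M_0$ and using the reciprocal trick $M_r(a)=1/M_{-r}(1/a)$ takes care of both. The equality statement requires $k\ge2$ with non-identical values, in which case the strict convexity or concavity makes every inequality strict.
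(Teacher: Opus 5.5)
Your proof is correct and complete, including the equality case and the orders $\pm\infty$: Jensen applied to $t\mapsto t^{s/r}$ handles two orders of the same sign, and the pivot through $M_0$ together with $M_r(a)=1/M_{-r}(1/a)$ handles the crossing of $0$. The paper gives no proof of its own and only cites Hardy--Littlewood--P\'olya (Chapter~2), whose argument follows this standard pattern (reduction via reciprocals, comparison with the geometric mean, and convexity or H\"older's inequality for $0<r<s$), so there is no substantive difference to compare.
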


\citet{hardy1952inequalities} provide an extensive list of elementary properties of $M_r$, including fundamental inequalities. Power means can be further generalized to $f$-means, where aggregation is defined through a transformation function $f$; see \citet{de2016mean} for a characterization of generalized means as $f$-means and their statistical properties.

\paragraph{Generalized means of probability densities.}We now extend the concept to the setting of probability densities. 
Given $k$ probability densities $p^{(1)},\dots,p^{(k)} \in \mathcal{P}$, we define 
\begin{equation}
M_{k,r}(\vx) \coloneqq M_{r}\big(p^{(1)}(\vx),\dots,p^{(k)}(\vx)\big).
\end{equation}
This yields a nonnegative function on $\mathcal{X}$, which does not necessarily integrate to one. For example, taking the geometric mean of two Gaussian densities does not in general produce a normalized density, and a normalization step is therefore required.

\begin{definition}[\textbf{Generalized power mean of densities}]\label{definition: generalized power mean of densities}
The mean of order $r$ associated with $p^{(1)},\dots,p^{(k)} \in \mathcal{P}$ is defined as
\begin{equation}
\compose{p}_{k,r}(\vx) \coloneqq \frac{1}{Z_{k,r}}\, M_{k,r}(\vx),
\end{equation}
where the normalization constant is given by
\begin{equation}
Z_{k,r} \coloneqq \int_{\mathcal{X}} M_{k,r}(\vx)\,\mathrm{d}\vx.
\end{equation}
\end{definition}

By construction, $\compose{p}_{k,r} \in \mathcal{P}$ whenever $Z_{k,r}$ is finite, which is true for all $r$ (see \Cref{proposition: generalized p-mean well defined}).

\subsection{Related work}

Closely related to our approach, \citet{cooke1991experts} study a family of density aggregation operators of the form $\compose{p}_{k,r} \propto M_{k,r}$, which coincides with our $r$-family of aggregations. They derive several immediate properties of this family, including properties analogous to those mentioned in \Cref{section: canonical examples} for the canonical cases, and \Cref{eq: case r equal 1}.

Differently, \citet{amari2007integration} introduce an \emph{$\alpha$-family of integrations}, where $\alpha \in \Z \cup \{ -\infty,+\infty \}$, based on the $\alpha$-representation of densities in information geometry  \citep{amari2000methods}, which also includes the classical means as special cases. They relate their mean to the $\alpha$-divergence
$D_{\alpha}(p \,\|\, q)$ for any $p,q \in \mathcal{P}$ \citep{chernoff1952measure,amari2000methods}, that include as special cases of $\alpha$ the KL, reverse KL, or the square of Hellinger. \citet[Theorem~2]{amari2007integration} show that the $\alpha$-integration is optimal under the $\alpha$-divergence criterion, in the sense that it minimizes the weighted average $\alpha$-divergence to the individual distributions. This result generalizes \Cref{eq: mixture minimizes KL,eq: PoE minimizes reverse KL}, and connects to earlier Bayesian formulations of generalized predictive densities \citep{corcuera1999generalized}.

Finally, a closely related line of work studies the general theory of distribution aggregation under the name of \emph{opinion pooling} \citep{elkin2025opinion}. In this framework, aggregation is defined abstractly as a parameter-free operator
$F : (P_1,\dots,P_n) \mapsto P^\star$,
mapping a profile of probability distributions to a collective one. This literature characterizes desirable axiomatic properties of such operators (e.g., unanimity preservation, independence, compatibility with Bayesian updating, ...).



\section{Likelihood guarantees for generalized mean aggregation}\label{section: aggregation of densities theory}

We now present the core theoretical contribution of this paper by analyzing the generalized power mean aggregation and by identifying when it guarantees a systematic improvement in log-likelihood. After establishing that the generalized mean defines a normalizable density in \Cref{section: does every order r density}, we prove in \Cref{section: when provably reliable} that such an improvement holds for orders $r \in [0,1]$, precisely between the geometric and arithmetic means, explaining their robustness. We finally show in \Cref{section: failure cases cex} via counterexamples that this guarantee fails outside this interval, with qualitatively different failure modes for $r<0$ and $r>1$.

\subsection{Does every order $r$ define a density?}\label{section: does every order r density}
An important point to examine is whether the normalization constant is finite for all values of $r$, and the following proposition shows that it is indeed the case. 
\begin{proposition}[\textbf{Generalized means of order $r$ are well defined}]\label{proposition: generalized p-mean well defined}
Assume $p^{(1)},\dots,p^{(k)}$ are $k$ positive probability density functions over $\mathcal{X}$. Then for all $r \in \overline{\R}$
\begin{equation}
\int_{\R^d} M_{k,r}(\vx) \vd \vx < \infty.
\end{equation}

\end{proposition}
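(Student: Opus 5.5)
The plan is to bound $M_{k,r}$ pointwise by an integrable function, using the monotonicity in $r$ from \Cref{lemma: p-mean inequality}. The key observation is that the maximum is already integrable. For every $\vx$,
\begin{equation*}
M_{k,+\infty}(\vx) = \max_i p^{(i)}(\vx) \leq \sum_{i=1}^k p^{(i)}(\vx),
\end{equation*}
and so, since each $p^{(i)}\in\mathcal{P}$,
\begin{equation*}
\int M_{k,+\infty}(\vx)\,\vd\vx \leq \sum_{i=1}^k \int p^{(i)}(\vx)\,\vd\vx = k < \infty.
\end{equation*}

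For any finite $r$, the pointwise values $p^{(1)}(\vx),\dots,p^{(k)}(\vx)$ are positive by assumption, so \Cref{lemma: p-mean inequality} applies at each $\vx$. I will not invoke the lemma with $s=+\infty$ directly. Instead I will verify the elementary bound $M_r(a_1,\dots,a_k)\le \max_i a_i$ for finite $r$:
\begin{itemize}
\item for $r>0$, use $\frac1k\sum_i a_i^r\le \max_i a_i^r$ and take the $1/r$ power, which is increasing;
\item for $r<0$, the same inequality applied to the $a_i^r$ reverses when raised to the power $1/r<0$, giving $M_r\le \min_i a_i\le\max_i a_i$;
\item for $r=0$, the geometric mean is at most the maximum.
\end{itemize}
For $r=-\infty$ the mean is the minimum, which is trivially at most the maximum.

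Hence $0\le M_{k,r}\le M_{k,+\infty}$ pointwise for every $r\in\overline{\R}$. Monotonicity of the integral then gives
\begin{equation*}
\int M_{k,r}(\vx)\,\vd\vx \leq k,
\end{equation*}
which is the claim. I will also note that $M_{k,r}$ is measurable, since it is a continuous function (or a min/max) of measurable functions, so the integral is well defined.

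I see no real obstacle here; the only care needed is the sign handling for $r<0$ and the endpoint cases $r=\pm\infty$. Both are covered by the direct bound against the maximum, which avoids having to justify the lemma at infinite order. For $r=0$ one can alternatively use AM--GM to get $M_{k,0}\le M_{k,1}$, which integrates to exactly $1$.

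Finally, I would remark that positivity of $Z_{k,r}$ also holds: each $p^{(i)}>0$ everywhere, so $M_{k,r}>0$ everywhere for every $r$, and hence $Z_{k,r}>0$ and the normalization in \Cref{definition: generalized power mean of densities} makes sense.
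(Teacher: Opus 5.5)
Your strategy is the same as the paper's: bound $M_{k,r}$ pointwise by $\max_i p^{(i)}\le\sum_i p^{(i)}$ and integrate to get $k$. The only difference is that you check $M_r\le\max_i a_i$ by hand instead of citing \Cref{lemma: p-mean inequality} at order $+\infty$. However, your bullet for $r<0$ is wrong as written. For $r<0$ the map $t\mapsto t^r$ is decreasing, so $\max_i a_i^r=(\min_i a_i)^r$. Raising $\frac1k\sum_i a_i^r\le(\min_i a_i)^r$ to the power $1/r<0$ therefore gives $M_r\ge\min_i a_i$, which is a lower bound, not the upper bound you need. Your stated conclusion $M_r\le\min_i a_i$ is in fact false whenever the $a_i$ are not all equal: the harmonic mean of $1$ and $2$ is $4/3>1$. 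For every finite $r$, in that case one has $\min_i a_i<M_r<\max_i a_i$. So the one step you flagged as needing care is exactly the step that fails.

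The repair takes one line. Start instead from $\frac1k\sum_i a_i^r\ge\min_i a_i^r=(\max_i a_i)^r$, and raise it to the power $1/r<0$, which reverses it to $M_r\le\max_i a_i$. Alternatively, for $r<0$ you can apply \Cref{lemma: p-mean inequality} with the finite order $s=1$. This gives $M_{k,r}\le M_{k,1}$ and hence even $\int M_{k,r}\le 1$, as the paper notes in \Cref{eq: case r equal 1}. With this fix, the rest of your argument is correct and complete: the cases $r>0$, $r=0$ and $r=\pm\infty$, the measurability remark, and the positivity of $Z_{k,r}$.
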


\begin{proof}
We apply \Cref{lemma: p-mean inequality} followed by the definition of $M_{k,+\infty}$ from \Cref{eq: Mr infty}. For all $r \in \overline{\R}$,
\begin{align}
\int_{\R^d} M_{k,r}(\vx)\,\vd \vx
&\leq \int_{\R^d} M_{k,+\infty}(\vx)\,\vd \vx \\
&\leq \int_{\R^d} \sum_{i=1}^k p^{(i)}(\vx)\,\vd \vx = k < +\infty.
\end{align}
Additionally, it is worth noting that when $r \leq 1$ (for example for the geometric mean),
\begin{equation}\label{eq: case r equal 1}
\int_{\R^d} M_{k,r}(\vx) \vd \vx \leq \int_{\R^d} M_{k,1}(\vx) \vd\vx=1.
\end{equation}
\end{proof}
To the best of our knowledge, no previous work has established a bound on the normalizing constant for all real values of $r$. Proofs for the geometric pooling ($r=0$) appear in (\citealp{genest1986characterization}, p. 489, \citealp{carvalho2023}, Theorem 2.1). The case $r \leq 1$, also mentioned by \citet[Chapter~11]{cooke1991experts}, is of particular interest: the bound on the normalizing constant in \Cref{eq: case r equal 1} implies $-\log Z_{k,r} \geq 0$. The sign of $-\log Z_{k,r}$ is crucial for analyzing $\compose{p}_{k,r}$, since the log-likelihood can be written as $\log \compose{p}_{k,r} = \log M_{k,r} - \log Z_{k,r}$. It therefore directly determines whether normalization increases or decreases the overall likelihood, which is central to our arguments (see \Cref{theorem: log-likelihood inequality}).

\subsection{When is the generalized mean reliably beneficial?}
\label{section: when provably reliable}

\begin{figure*}[t]
    \centering
    \begin{subfigure}{0.33\textwidth}
        \centering
        \includegraphics[width=\linewidth]{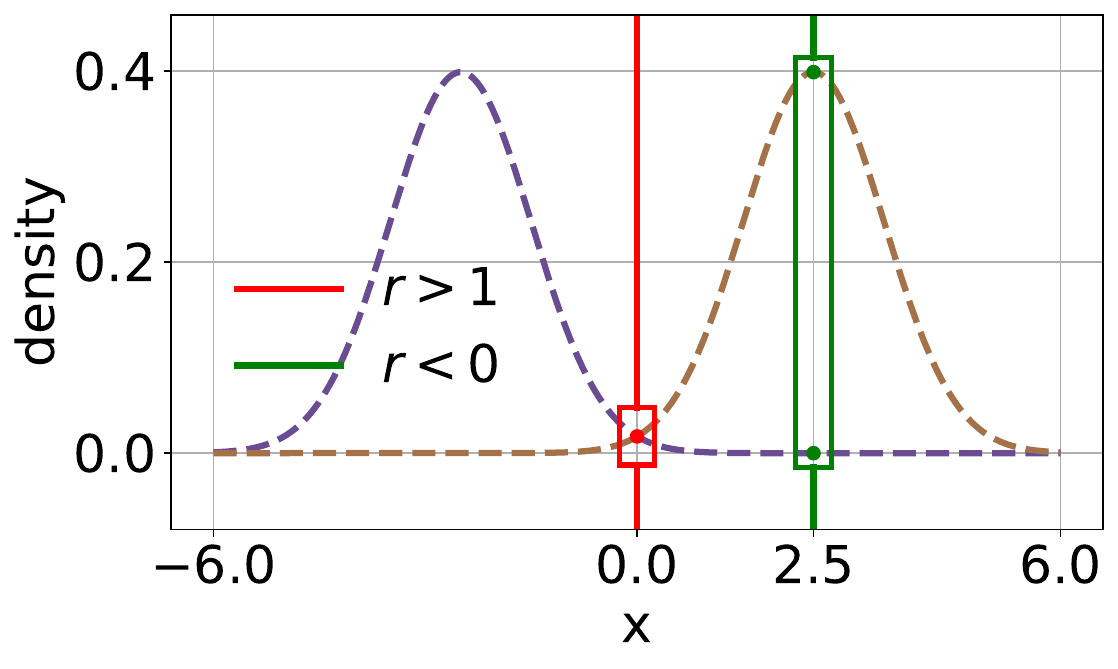}
        \caption{The two experts}
        \label{fig:counterexamples_gaussians}
    \end{subfigure}
    \hfill
    \begin{subfigure}{0.33\textwidth}
        \centering
        \includegraphics[width=\linewidth]{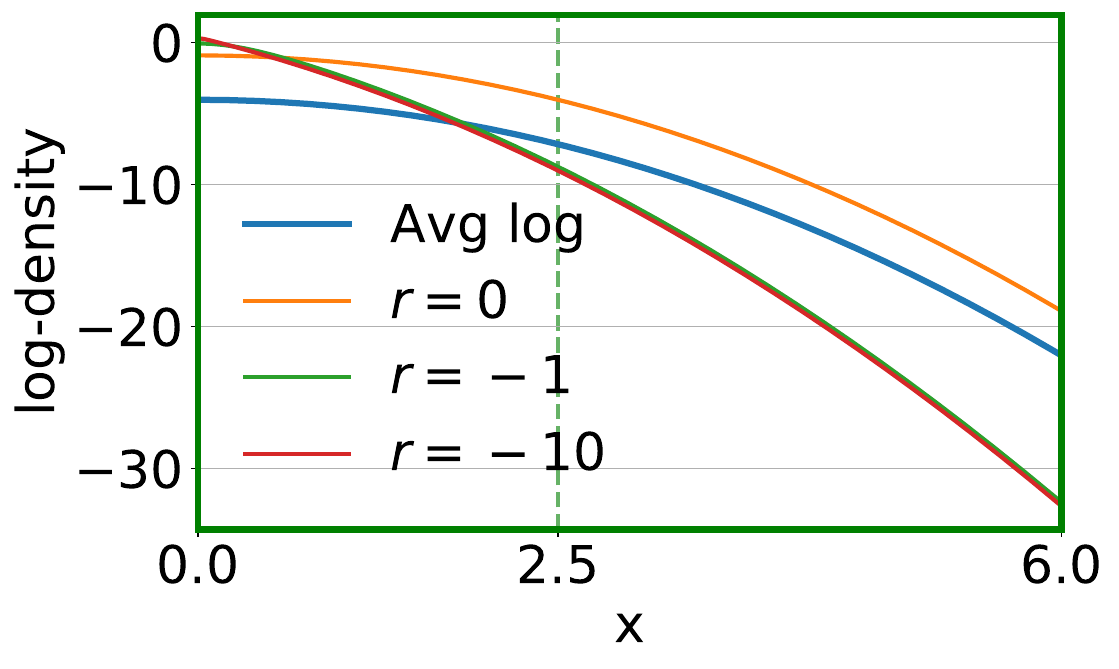}
        \caption{$r<0$}
        \label{fig:counterexamples_gaussians_m}
    \end{subfigure}
    \hfill
    \begin{subfigure}{0.33\textwidth}
        \centering
        \includegraphics[width=\linewidth]{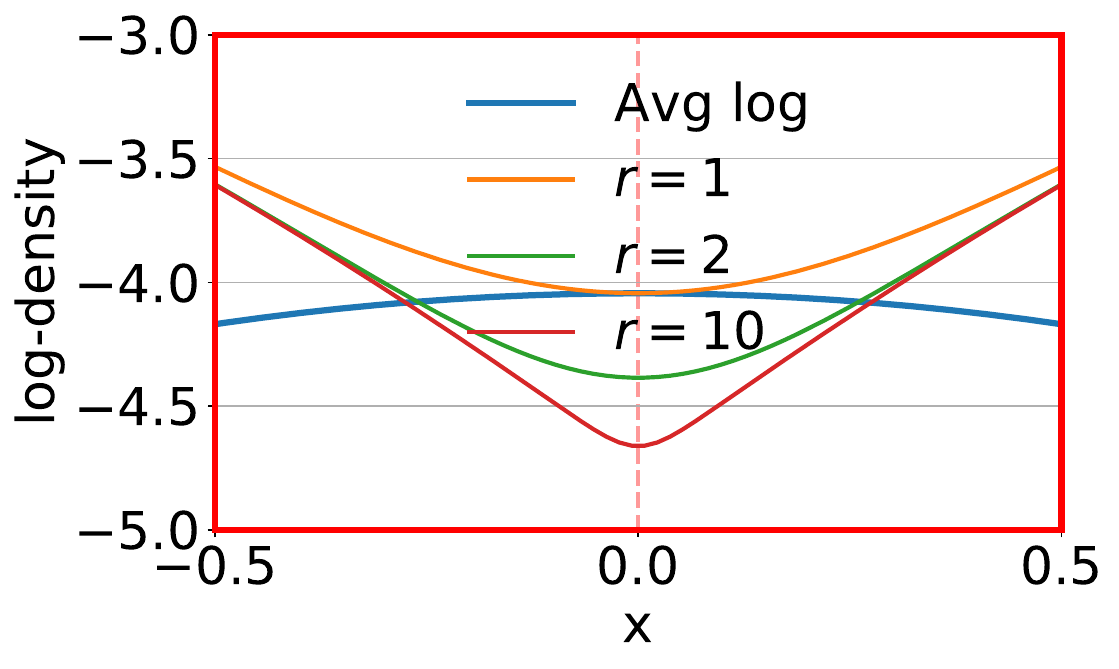}
        \caption{$r>1$}
        \label{fig:counterexamples_gaussians_0}
    \end{subfigure}
    
    \caption{
    \textbf{Visual illustration of \Cref{theorem: counterexamples}. Aggregation fails at points where densities strongly differ when $r<0$, and at consensus points when $r>1$.} We look at the location of the gap in \Cref{eq: bad inequality likelihood} across $r$ regimes. ``Avg log'' denotes the average of individual log-likelihoods (right-hand side of \Cref{eq: bad inequality likelihood}).
    Left (a): We consider two gaussians $\mathcal{N}(\pm2.5,1)$. Center (b): For $r<0$, the inequality of \Cref{theorem: log-likelihood inequality} is satisfied at $x=2.5$. We observe that the gap is not confined and amplifies when $x$ increases.
    Right (c): For $r \ge 1$, the AM ($r=1$) globally dominates the average log-density, while all $r>1$ exhibit a very localized (but non-punctual) negative effect around $x=0$.
    }
    \label{fig:counterexamples}
\end{figure*}

We examine the important question of when aggregation guarantees an improvement in comparison to the average individual log-likelihood, which would correspond to a ``wisdom of crowds'' effect, and show that this occurs exactly for $r \in [0,1]$. 

\begin{theorem}[\textbf{Wisdom of Crowds on Log-likelihood}]\label{theorem: log-likelihood inequality}
Let $\vx \in \mathcal{X}$ and $k \in \N^*$. Assume all density functions $p^{(1)},\dots,p^{(k)}$ are positive. If $0 \leq r\leq 1$,
\begin{equation}\label{eq: inequality log power mean}
\log \compose{p}_{k,r}(\vx) \geq \frac{1}{k} \sum_{i=1}^k \log p^{(i)}(\vx).
\end{equation}
\end{theorem}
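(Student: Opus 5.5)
The plan is to split the log of the normalized mean into its two natural pieces and bound each one separately:
\begin{equation*}
\log \compose{p}_{k,r}(\vx) = \log M_{k,r}(\vx) - \log Z_{k,r}.
\end{equation*}
We will show that both pieces are favorable when $0 \leq r \leq 1$. That is, we will show $\log M_{k,r}(\vx) \geq \frac{1}{k}\sum_i \log p^{(i)}(\vx)$ pointwise, and $-\log Z_{k,r} \geq 0$. Adding the two inequalities then gives \Cref{eq: inequality log power mean}.

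\textbf{The unnormalized term.} Since every $p^{(i)}(\vx) > 0$, \Cref{lemma: p-mean inequality} applies to the positive numbers $a_i = p^{(i)}(\vx)$. For $r \geq 0$ it gives
\begin{equation*}
M_{k,r}(\vx) \geq M_{k,0}(\vx) = \Big(\prod_{i=1}^k p^{(i)}(\vx)\Big)^{1/k}.
\end{equation*}
When $r = 0$ this holds with equality. Taking logarithms, which is monotone, gives
\begin{equation*}
\log M_{k,r}(\vx) \geq \frac{1}{k}\sum_{i=1}^k \log p^{(i)}(\vx).
\end{equation*}
This half uses only the lower constraint $r \geq 0$.

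\textbf{The normalization term.} For $r \leq 1$, \Cref{lemma: p-mean inequality} gives $M_{k,r}(\vx) \leq M_{k,1}(\vx) = \frac{1}{k}\sum_i p^{(i)}(\vx)$ for every $\vx$. Integrating this over $\mathcal{X}$, exactly as in \Cref{eq: case r equal 1}, yields
\begin{equation*}
Z_{k,r} \leq \frac{1}{k}\sum_{i=1}^k \int_{\mathcal{X}} p^{(i)}(\vx)\,\vd\vx = 1.
\end{equation*}
Hence $-\log Z_{k,r} \geq 0$. We also need $Z_{k,r} > 0$ so that the logarithm is finite. This holds because $M_{k,r}$ is strictly positive wherever the densities are positive, so its integral is strictly positive. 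Finiteness of $Z_{k,r}$ is already given by \Cref{proposition: generalized p-mean well defined}. This half uses only the upper constraint $r \leq 1$.

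\textbf{Main obstacle.} The key step is recognizing that the two constraints play complementary roles: $r \geq 0$ handles the pointwise comparison with the geometric mean, and $r \leq 1$ handles the normalizer. For $r > 1$ the normalizer can exceed $1$, and for $r < 0$ the pointwise comparison fails. The only remaining care is in the measure-theoretic details, namely positivity and measurability of $M_{k,r}$ so that $Z_{k,r} \in (0,1]$. The boundary cases $r = 0$ and $r = 1$ are covered directly by the same two inequalities, one of which becomes an equality in each case.
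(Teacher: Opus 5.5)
Your proposal is correct and follows the paper's proof: it uses the same decomposition $\log \compose{p}_{k,r} = \log M_{k,r} - \log Z_{k,r}$, and it obtains $Z_{k,r}\le 1$ in the same way, from $M_{k,r}\le M_{k,1}$ for $r\le 1$. The only cosmetic difference is that the paper gets the pointwise bound from Jensen's inequality for the concave logarithm (treating $r=0$ separately), which is exactly the $M_r \ge M_0$ case of the power mean inequality you invoke.
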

\begin{proof}
Let $\vx \in \mathcal{X}$, and $0 < r \leq 1$. By Jensen’s inequality (concavity of $\log$) and since $r>0$,
\begin{align}
\log M_{k,r}(\vx)/Z_{k,r} & =  \frac{1}{r}\log \left( \frac{1}{k} \sum_{i=1}^k [p^{(i)}(\vx)]^r\right) - \log Z_{k,r} \\
& \geq \frac{1}{rk} \sum_{i=1}^k \log ([p^{(i)}(\vx)]^r) - \log Z_{k,r}. \label{eq: jensen inequality r positive}
\end{align}
Hence,
\begin{equation}\label{eq: final equation proof}
\log \compose{p}_{k,r}(\vx) \geq \frac{1}{k} \sum_{i=1}^k \log p^{(i)}(\vx) - \log Z_{k,r}. 
\end{equation}
Note that for $r=0$, \Cref{eq: final equation proof} still holds by using $\log \prod = \sum \log$.

Since $r \leq 1$, it is straightforward from \Cref{eq: case r equal 1} that
\begin{equation}
\log M_{k,r}(\vx)/Z_{k,r} \geq \frac{1}{k} \sum_{i=1}^k \log p^{(i)}(\vx)
\end{equation}
since $ \log Z_{k,r} \leq 0$.

\end{proof}

Log-likelihood quantifies how well a model $p$ explains data, and larger values indicating better fit. \Cref{theorem: log-likelihood inequality} therefore establishes aggregation reliability for $r \in [0,1]$, by guaranteeing that, for any data point $\vx$, the aggregated model achieves a log-likelihood at least as large as the average of individual ones. Thus $[0,1]$ can be seen as a ``safe'' interval.

The result is particularly revealing: the geometric mean~\mbox{($r=0$)} is the most pessimistic and least democratic aggregation that remains reliably beneficial, while the arithmetic mean ($r=1$) is the least pessimistic and most democratic one to achieve this. Together, they span the entire regime of safe aggregation, explaining why these two classical rules are privileged in the literature.

\subsection{Failure cases outside the reliability range}\label{section: failure cases cex}
The ``wisdom of crowds'' effect no longer holds when $r<0$ or $r>1$, as \Cref{eq: inequality log power mean} fails to hold for all $\vx$. This is formalized in the following theorem.

\begin{theorem}[\textbf{Failure of the Wisdom of Crowds}]\label{theorem: counterexamples}
Let $r \notin [0,1]$. Then there exist two positive densities
$p^{(1)}, p^{(2)} \in \mathcal{P}$ with $p^{(1)} \neq p^{(2)}$
and a point $\vx \in \mathcal{X}$ such that
\begin{equation}\label{eq: bad inequality likelihood}
\log \compose{p}_{2,r}(\vx)<\frac{1}{2} \sum_{i=1}^2 \log p^{(i)}(\vx).
\end{equation}

\end{theorem}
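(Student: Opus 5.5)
We split into the two regimes $r>1$ (including $+\infty$) and $r<0$ (including $-\infty$). In both cases we write
\begin{equation*}
\log \compose{p}_{2,r}(\vx) - \tfrac{1}{2}\textstyle\sum_{i} \log p^{(i)}(\vx) = \big[\log M_{2,r}(\vx) - \log M_{2,0}(\vx)\big] - \log Z_{2,r}.
\end{equation*}
The goal is to find a point $\vx$ where this quantity is negative. The first bracket is always nonnegative for $r>0$ and nonpositive for $r<0$, by \Cref{lemma: p-mean inequality}.

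\textbf{Case $r>1$.} Here we exploit a consensus point. If $p^{(1)}(\vx)=p^{(2)}(\vx)$, the bracket vanishes by the equality case of \Cref{lemma: p-mean inequality}, so the difference equals $-\log Z_{2,r}$. It therefore suffices to show $Z_{2,r}>1$. For $r>1$, the power mean inequality gives $M_{2,r}\ge M_{2,1}$ pointwise, with strict inequality wherever $p^{(1)}(\vx)\neq p^{(2)}(\vx)$. Integrating, $Z_{2,r}\ge \int M_{2,1}=1$, with strict inequality as soon as $p^{(1)}\neq p^{(2)}$ on a set of positive measure. Since the densities are positive and differ, this holds for any reasonable choice, for instance continuous densities that differ somewhere.

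It remains to exhibit a crossing point. Take $p^{(1)}=\mathcal{N}(-a,1)$ and $p^{(2)}=\mathcal{N}(a,1)$ with $a>0$, and $\vx=0$, where the two densities agree by symmetry. This matches \Cref{fig:counterexamples}(c). The case $r=+\infty$ is handled identically, since $M_{2,+\infty}=\max$ and $\int \max > 1$.

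\textbf{Case $r<0$.} Now $Z_{2,r}\le 1$, so $-\log Z_{2,r}\ge 0$ works against us. We must use a disagreement point where the negative bracket dominates.

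With the same Gaussians, the bracket at $x$ is controlled by the ratio $t=p^{(2)}(x)/p^{(1)}(x)=e^{2ax}$. For $r<0$ we have $M_{2,r}(1,t)\to 2^{-1/r}$ as $t\to\infty$, so that
\begin{equation*}
\log M_{2,r} - \log M_{2,0} = \log M_{2,r}(1,t) - \tfrac{1}{2}\log t \to -\infty.
\end{equation*}
Meanwhile $-\log Z_{2,r}$ is a fixed finite constant: $Z_{2,r}>0$ because $M_{2,r}>0$ everywhere. Hence for $x$ large enough the difference is negative.

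For $r=-\infty$ the same argument applies directly. There $\min(p^{(1)},p^{(2)})/\sqrt{p^{(1)}p^{(2)}}=t^{-1/2}\to 0$, while $Z_{2,-\infty}>0$.

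\textbf{Main obstacle.} The delicate step is the $r>1$ case. There we must certify $Z_{2,r}>1$ strictly, and we must ensure that the chosen point is a genuine agreement point. The strict inequality follows from the strict power mean inequality on a positive-measure set. For continuous densities that differ at one point, they differ on an open set, which settles it.

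For $r<0$, the only care needed is that $Z_{2,r}$ is bounded away from $0$. This is immediate because we use positive, fixed densities.
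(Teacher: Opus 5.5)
Your proof is correct. For $r>1$ (including $r=+\infty$) it matches the paper's argument. At the agreement point $x=0$ of $\mathcal{N}(\pm a,1)$ the unnormalized term equals the average log-density, and $Z_{2,r}>1$ follows from the strict power mean inequality on a set of positive measure.

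For $r<0$ you take a genuinely different limit. The paper evaluates at the mode $x=m$ of $p^{(2)}$ and lets the separation $m\to\infty$, showing $\log M_{2,r}(m)-\tfrac12\sum_i\log p^{(i)}(m)\sim -m^2$. In that parametrization the densities, and hence $Z_{2,r}$, move with $m$. In fact $Z_{2,r}\le 2^{-1/r}\int\min(p^{(1)},p^{(2)})\to 0$, so the correction $-\log Z_{2,r}\ge 0$, which works against the desired inequality, diverges. The appendix only exhibits the unnormalized reversal. To conclude for $\compose{p}_{2,r}$ along that route, one also needs a lower bound such as $Z_{2,r}\ge\int\min(p^{(1)},p^{(2)})=2(1-\Phi(m))$. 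This gives $-\log Z_{2,r}\le m^2/2+O(\log m)$, leaving a net gap of order $-m^2/2$.

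Your route fixes the pair and sends $x\to\infty$. By homogeneity the bracket becomes $\log M_{2,r}(1,t)-\tfrac12\log t$ with $t=e^{2ax}$, which tends to $-\infty$, while $-\log Z_{2,r}$ stays a fixed finite constant. So no quantitative control of the normalizer is needed, which makes your version the cleaner and tighter one. It also handles $r=-\infty$ with the same Gaussians, whereas the paper treats $r=\pm\infty$ separately with explicit discrete classification examples.

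What the paper's parametrization buys is interpretation: the failure sits at one expert's mode, and its size is driven by the disagreement ratio $e^{2m^2}$ between the experts.
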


We establish \Cref{theorem: counterexamples} through \textbf{two different types of counter-examples} on $\vx$ depending on the \textbf{position of $r$} relative to the interval $[0,1]$. More precisely, when $r < 0$, \Cref{eq: inequality log power mean} breaks down at disagreement points, while for $r>1$ it fails at consensus points. As an illustrative example supporting this point and \Cref{theorem: counterexamples}, we consider two symmetric Gaussian densities defined as \begin{equation}
p^{(1)}(x)=\mathcal{N}(x;-m,1), 
\quad \text{and} \quad 
p^{(2)}(x)=\mathcal{N}(x;m,1).
\end{equation}

\paragraph{The aggregation breaks down at disagreement points.} When $r<0$, $x=m$ satisfies \Cref{eq: bad inequality likelihood}. This point corresponds to a pronounced imbalance, since for large $m$
\begin{equation}
p^{(2)}(m) \gg p^{(1)}(m) \approx 0,
\end{equation}
that is a location where the densities strongly disagree. The gap is in fact exponential, as \begin{equation}
\frac{p^{(2)}(m)}{p^{(1)}(m)}=e^{2m^2},
\end{equation} and this situation is illustrated in \Cref{fig:counterexamples_gaussians} in which $x=m$ corresponds to the mode of $p^{(2)}$ and the tail of $p^{(1)}$. Hence, it results in the average log-likelihood exceeding the log power mean on this point when $r<0$, as shown in \Cref{fig:counterexamples_gaussians_m}. This behavior arises because pessimistic (min-like) power means heavily penalize regions where at least one density assigns nearly zero probability mass. A detailed proof is provided in Appendix~\ref{app: symmetric gaussians counterexample}, where we show that this penalization effect for large $m$ reverses Jensen's inequality from \Cref{eq: jensen inequality r positive}.

\paragraph{The aggregation breaks down at consensus points.} When $r>1$, we show in contrast that $x=0$ satisfies \Cref{eq: bad inequality likelihood}. 
This corresponds to a positive consensus point, namely a location where both densities agree and assign nonzero probability mass. Indeed, at this point,
\begin{equation}
p^{(1)}(0)=p^{(2)}(0)>0,
\end{equation}
while the two densities differ elsewhere. Such a situation is illustrated in \Cref{fig:counterexamples_gaussians}. 
As shown in \Cref{fig:counterexamples_gaussians_0}, the average log-likelihood exceeds $\compose{p}_{2,r}$ at this point when $r>1$. Unlike the previous case, this agreement is localized, resulting in a highly local breakdown of the inequality. 
The failure originates from the normalization: although max-like aggregation favors large values, normalization redistributes probability mass toward regions where one density dominates, thereby weakening the contribution of the consensus point. A detailed proof is provided in Appendix~\ref{app: symmetric gaussians counterexample}, where we show explicitly that $Z_{2,r}$ enforces \Cref{eq: bad inequality likelihood} even though Jensen's inequality holds with equality.

\begin{figure*}[t]
    \centering
    \begin{subfigure}{0.32\textwidth}
        \centering
        \includegraphics[width=\linewidth]{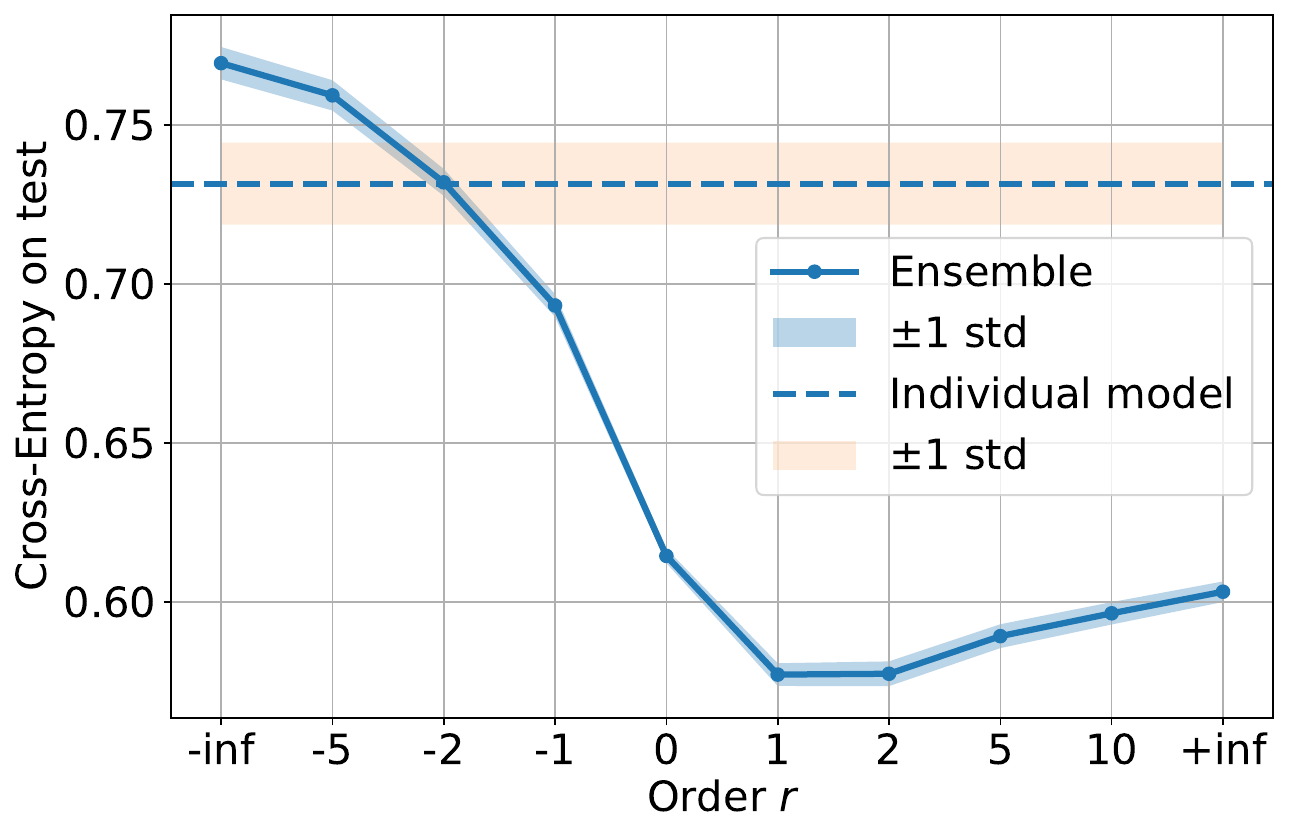}
        \caption{CIFAR-100}
        \label{fig:cifar_triptych_a}
    \end{subfigure}
    \hfill
    \begin{subfigure}{0.32\textwidth}
        \centering
        \includegraphics[width=\linewidth]{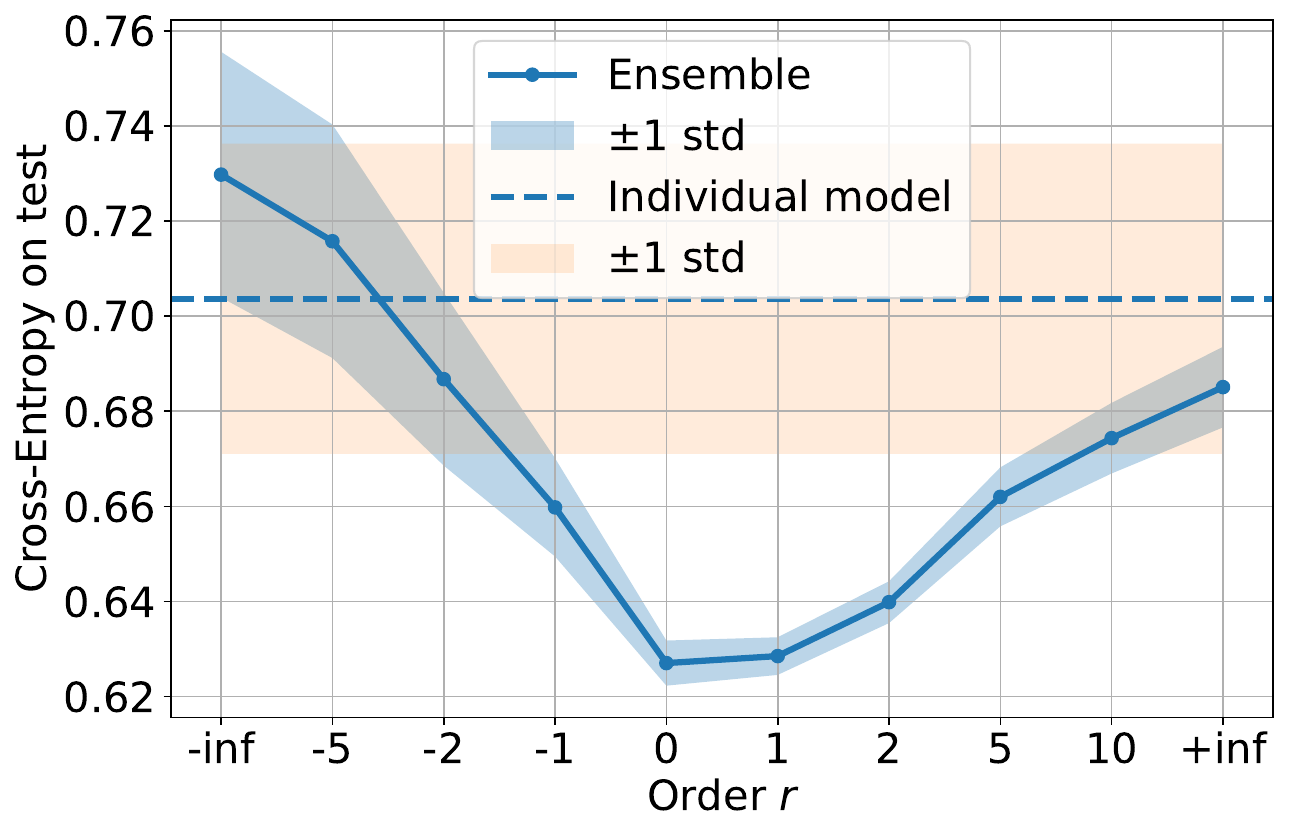}
        \caption{MedMNIST}
        \label{fig:medmnist_a}
    \end{subfigure}
    \hfill
    \begin{subfigure}{0.32\textwidth}
        \centering
        \includegraphics[width=\linewidth]{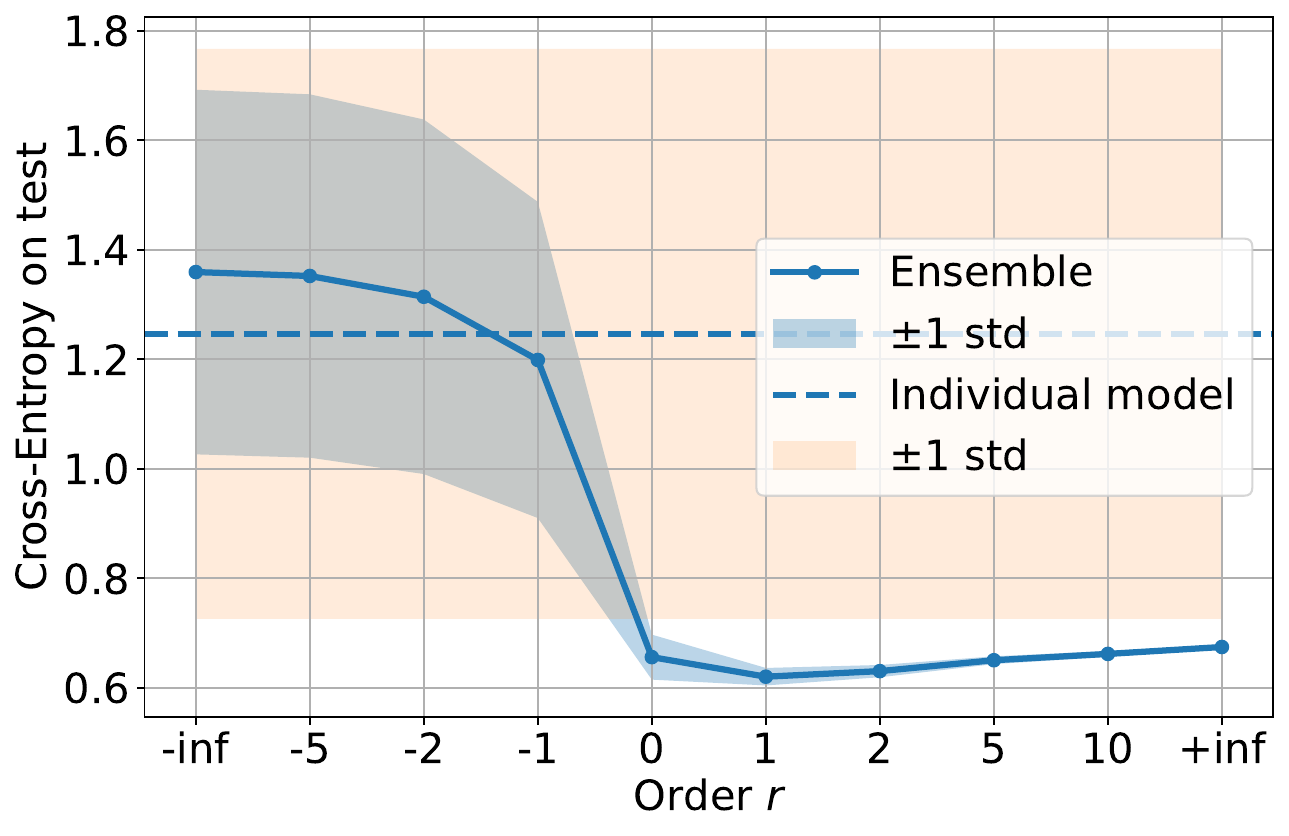}
        \caption{IMDb}
        \label{fig:imdb_a}
    \end{subfigure}
    \caption{\textbf{Illustration of the cross-entropy behavior of the normalized power mean likelihood $\compose{p}_{k,r}$ (via cross-entropy) in three classification settings.} All plots exhibit a U-shaped performance curve: extreme aggregation amplifies model disagreement, whereas optimal values lie in $[0,1]$. Consistent with \Cref{theorem: log-likelihood inequality}, the regime $r \in [0,1]$ remains reliably below the individual model uncertainty band. By contrast, negative orders ($r<0$) perform poorly, likely due to disagreements on dominant classes, as discussed in \Cref{section: failure cases cex}.
    }
    \label{fig:global_r}
\end{figure*}
\begin{figure*}[t]
    \centering
    \begin{subfigure}{0.32\textwidth}
        \centering
        \includegraphics[width=\linewidth]{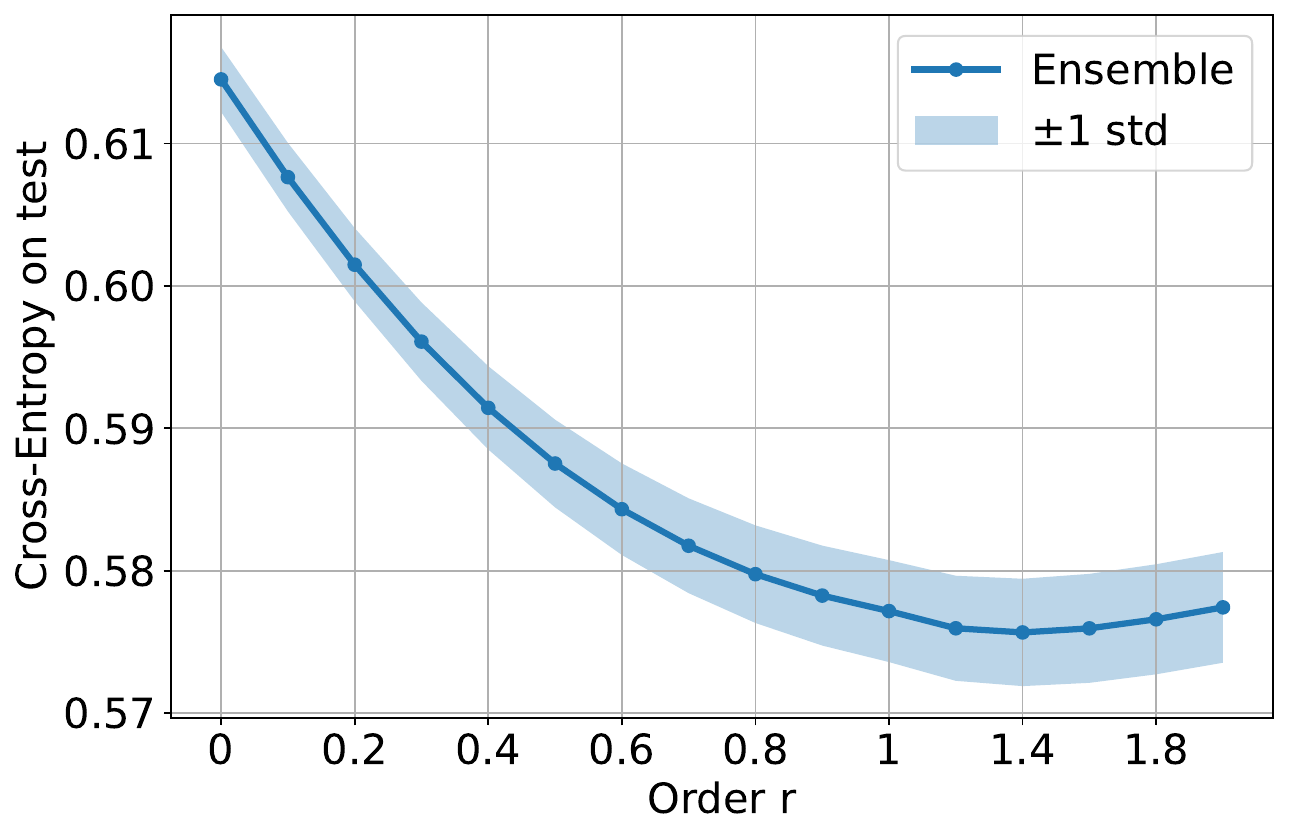}
        \caption{CIFAR-100}
        \label{fig:cifar_triptych_b}
    \end{subfigure}
    \hfill
    \begin{subfigure}{0.32\textwidth}
        \centering
        \includegraphics[width=\linewidth]{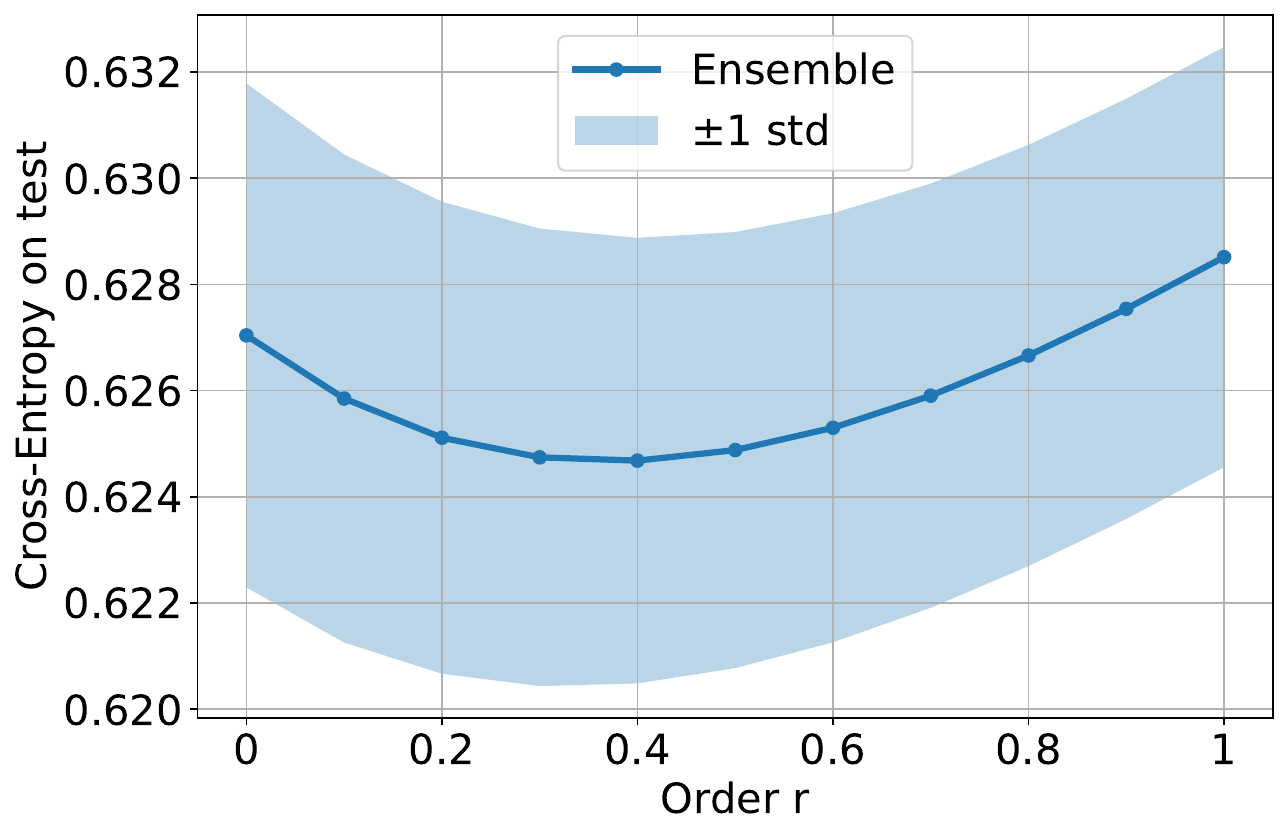}
        \caption{MedMNIST}
        \label{fig:medmnist_b}
    \end{subfigure}
    \hfill
    \begin{subfigure}{0.32\textwidth}
        \centering
        \includegraphics[width=\linewidth]{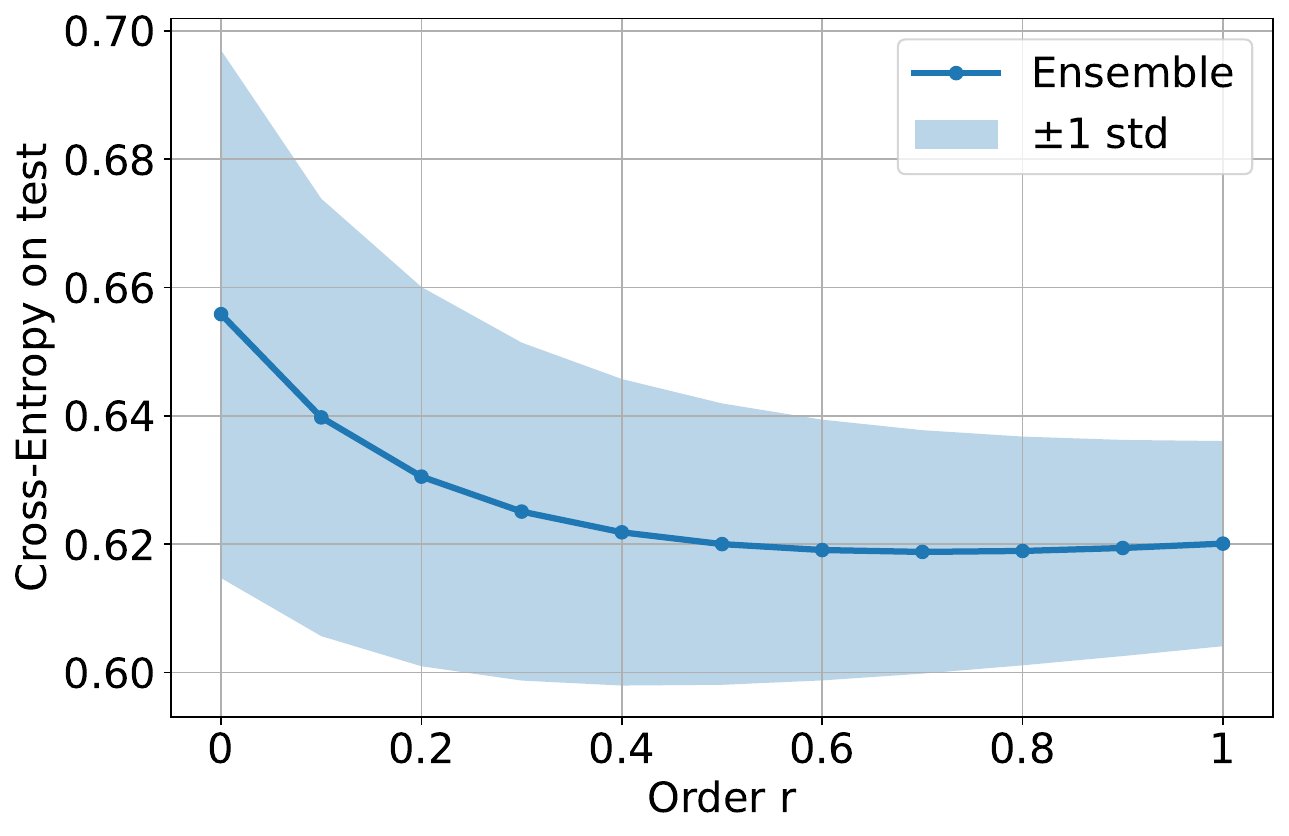}
        \caption{Imdb}
        \label{fig:imdb_b}
    \end{subfigure}
    \caption{\textbf{Illustration of the cross-entropy behavior of the normalized power mean likelihood $\compose{p}_{k,r}$, focusing on values of $r$ around $[0,1]$ (zoomed view of \Cref{fig:global_r}).} On MedMNIST (b) and IMDb (c), the optimal likelihood lies within the reliability interval $[0,1]$ (\Cref{theorem: log-likelihood inequality}), whereas on CIFAR-100 (a) it lies slightly beyond it ($\approx 1.4$). This shows that while the interval $[0,1]$ provides a stable and reliable regime, mild optimism outside it can still be beneficial in practice.
    }
    \label{fig:local_r}
\end{figure*}


\textbf{Non-continuous counter-example.} This phenomenon is even clearer in the extreme regimes and can be illustrated in the discrete (classification) setting.
We can show that for $r=-\infty$ (minimum), two classifiers over $y\in\{0,1\}$ with true label $y=0$, 
given by $p^{(1)}=(0.9,\,0.1)$ and $p^{(2)}=(0.01,\,0.99)$, form a counter-example.
Similarly, we can show that for $r=+\infty$ (maximum), two distinct classifiers over $y\in\{0,1,2\}$ with true label $y=0$, 
given by $p^{(1)}=(0.9,\,0.03,\,0.07)$ and $p^{(2)}=(0.9,\,0.07,\,0.03)$, also violate the inequality. See Appendix~\ref{appendix:extreme_counterexamples} for derivation.

\section{Experiments}\label{section: experiments}

In this section, we empirically test whether generalized mean aggregation within the theoretical safe regime $r \in [0,1]$ from \Cref{theorem: log-likelihood inequality} yields actual improvements in negative log-likelihood across all models and tasks considered, and analyze how behavior degrades outside this range.
\paragraph{How to build models to ensemble.}To study aggregation behavior in practice, we train multiple neural networks with identical architectures independently on the same dataset and loss, using different random initializations and data shuffles. This setup corresponds to the \textit{deep ensemble} (DE) framework introduced by \citet{lakshminarayanan2017simple} and \citet{fort2019deep}. In this case, each predictive distribution $p^{(i)}$ is represented by the softmax output of an independently trained classifier, and aggregation is performed over multiple such models. For each experiment, we build ensembles of size 10 and repeat the process across 5 independent ensemble draws to reduce variance. Aggregation is applied at inference time using the generalized mean of order $r$ (\Cref{definition: generalized power mean of densities}).

\paragraph{Datasets and architectures.} We evaluate aggregation across vision, medical imaging, and NLP to test the generality of the phenomenon. For multi-class classification, we use CIFAR-100 \citep{krizhevsky2009learning}, a benchmark of 60,000 low-resolution color images spanning 100 classes with only 600 images per class. The large label space and limited per-class data increase the classification difficulty and the likelihood of inter-model disagreement. In the medical domain, we consider DermaMNIST \citep{yang2023medmnist}, which contains dermatoscopic skin lesion images across  diagnostic categories. The dataset is strongly imbalanced, with benign melanocytic nevi representing about two thirds of samples while several classes account for only 1–5\%. We train standard convolutional residual networks \citep{he2016deep,zagoruyko2016wide} for these vision tasks. Finally, to assess aggregation in NLP, we train transformers \citep{vaswani2017attention} on IMDb \citep{maas2011learning}, a balanced binary sentiment classification dataset of 50,000 movie reviews. See Appendix~\ref{app:experimental details} for further details.

\paragraph{Evaluation.} We evaluate performance using test cross-entropy, which corresponds to the negative log-likelihood of the true labels under the predictive distribution. We measure how it varies as a function of $r$, with particular attention to the predicted safe regime $r \in [0,1]$.

\subsection{Intermediate orders are more reliable than extremes}

By analyzing aggregation across the full range $r \in \overline{\R}$, we show in \Cref{fig:global_r} that performance follows a characteristic  U-shaped curve, with reliable improvements confined to the regime $r \in [0,1]$.

Across all three datasets extreme orders degrade performance, whereas intermediate values, in particular $[0,1]$ (cf.~\Cref{theorem: log-likelihood inequality}), consistently outperform individual models. This pattern holds even under strong inter-model uncertainty (\Cref{fig:imdb_a}), where all $r \in [0,1]$ remain below the uncertainty band, supporting the reliability of this interval.

At the extremes, $r = -\infty$ yields worse cross-entropy than individual models, likely due to amplified disagreement (\Cref{section: failure cases cex}), especially on CIFAR-100 where the large label space increases conflicting predictions. Conversely, on DermaMNIST, $r = +\infty$ reaches the individual uncertainty region, suggesting that max-like aggregation can amplify shared confident errors likely induced by class imbalance.

Finally, ensemble variance is reduced relative to individual models for all values of $r$. Moreover it remains larger for $r<0$, and decreases for $r \geq 0$, indicating that larger values (in particular intermediate values) of $r$ yield stronger variance reduction while achieving the best overall log-likelihoods, supporting the ``wisdom of crowds'' effect.
\begin{figure}
    \centering
    \includegraphics[width=.9\linewidth]{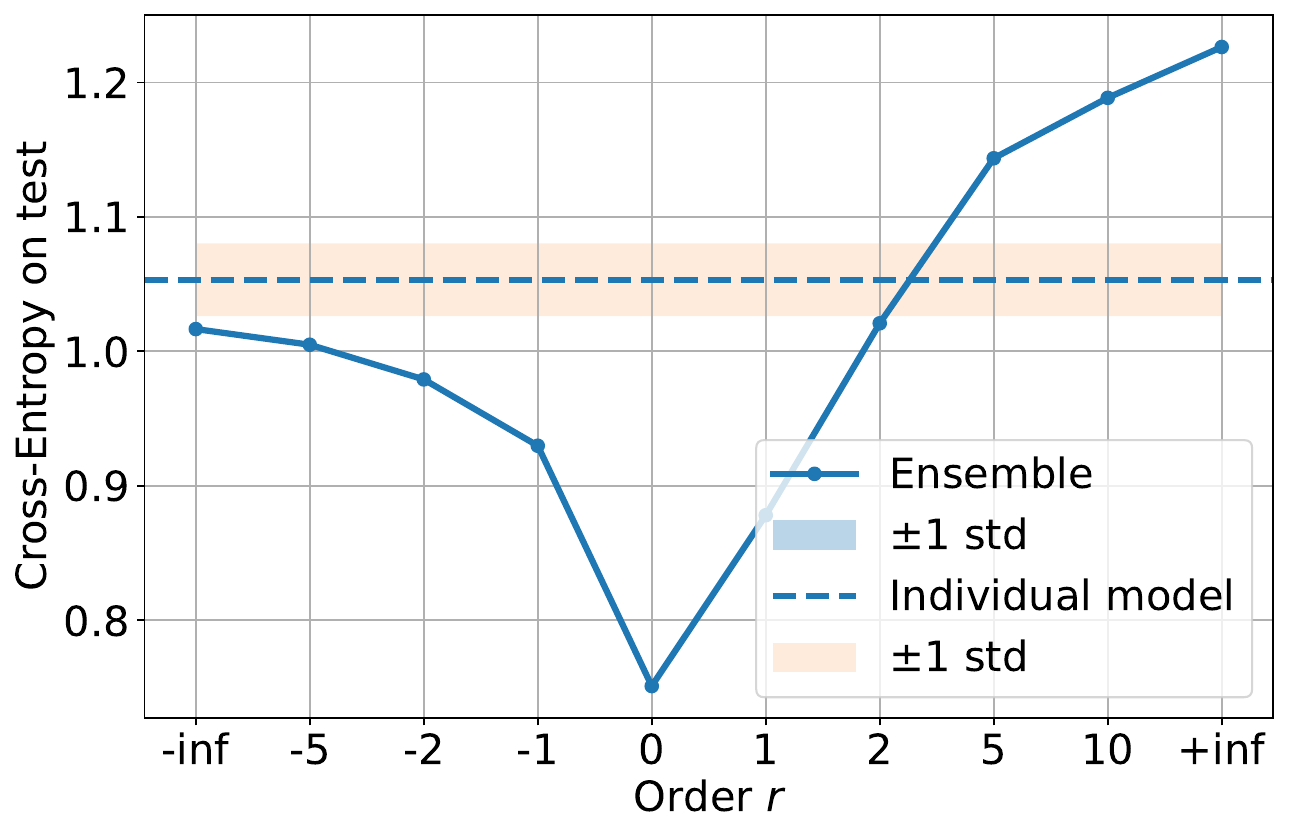}
    \caption{\textbf{Illustration of the cross-entropy behavior of the normalized power mean likelihood $\compose{p}_{k,r}$ on CIFAR-100 in a controlled near-consensus regime.} The trend contrasts with \Cref{fig:global_r}: extreme optimistic aggregation becomes harmful and the maximum operator performs worst.}
    \label{fig:cifar_triptych_c}
\end{figure}
\subsection{What is the optimal order $r$?}

We reveal through a finer sweep on each plot (\Cref{fig:local_r}) that the optimal $r$ is typically intermediate rather than located at extreme values, but that despite $I=[0,1]$ being the theoretically reliable regime, the empirically optimal $r$ value can sometimes lie outside of it.

In \Cref{fig:medmnist_b,fig:imdb_b}, the empirical optimum lies in $[0.3,0.5] \subset I$ and $[0.6,1] \subset I$, respectively. By contrast, \Cref{fig:cifar_triptych_b} suggests that mild optimism ($r \in [1.2,1.6] \nsubseteq I$) can still improve model likelihood in practice. 

This shows that the interval $[0,1]$ does not guarantee optimality, as better empirical performance may occur for values of $r$ exceeding $1$. Related work by \citet{safont2019multiclass} learns the optimal aggregation parameter directly from data for classification, within an alternative family of aggregation rules derived from $\alpha$-divergences.

We note that no universal ordering exists between the canonical values $r=0$ and $r=1$, as neither consistently dominates across datasets as observed in \Cref{fig:local_r}. Interestingly, the arithmetic mean ($r=1$) appears less favorable on the imbalanced dataset, in line with observations reported by \citet{buchanan2023effects} in predictive settings.

\subsection{Near-consensus makes optimism harmful}

We show in \Cref{fig:cifar_triptych_c} that aggregation behavior can reverse in comparison to \Cref{fig:cifar_triptych_a,fig:medmnist_a,fig:imdb_a} when models assign nearly equal probability to one class while differing elsewhere: extreme optimistic rules become harmful, and the maximum-type aggregation performs worst. This supports the counter-example in \Cref{section: failure cases cex}.

Such structured near-consensus is difficult to obtain with independently trained predictors, as agreement typically fluctuates across classes and inputs. To isolate this effect and get \Cref{fig:cifar_triptych_c}, we construct a controlled CIFAR-100 experiment by duplicating a trained predictor, and then adding small Gaussian perturbations to its logits except the true-class logit (while ensuring that the probabilities sum to one).


\section{Conclusion}

We studied the impact of aggregating probability distributions through the normalized generalized mean of order $r$ on densities, which defines a continuous family encompassing classical mixture and geometric aggregation rules as special cases. We theoretically show that $r \in [0,1]$ is the regime ensuring systematic improvements of log-likelihood over individual models, while outside orders may fail with distinct mechanisms. Simple Gaussian settings and experimental validation on classification tasks corroborate these results. Notably, we confirm the distinguished roles of the geometric and arithmetic means, whose widespread use aligns with their delimitation of the theoretical safe regime $[0,1]$. \\ Our work evaluates density operations through log-likelihood, while prior studies focus on accuracy. For instance, \citet{buchanan2023effects} compare averaging normalized logits and softmax outputs (logarithmic vs.\ linear pooling) and observe dataset-dependent behavior, with logarithmic pooling ($r=0$) favored on long-tailed datasets. This confirms the role of data in aggregation performance.\\ While our results identify the interval $r \in [0,1]$ as a provably reliable regime, experiments suggest that the empirically optimal value may depend on properties of the data or model. Consequently, a natural direction for future work is to better characterize the optimal aggregation order $r$.                                      

\begin{acknowledgements}
This work was supported by the French government through the France 2030 programme managed by the ANR.
PAM was also supported by the French government, managed by the ANR, with the reference number “ANR-23-IACL-0001”, as well as RS with the reference number “ANR-15-IDEX-01” and DG through
project NIM-ML “ANR-21-CE23-0005-01”.
\end{acknowledgements}

\bibliography{uai2026-template}

\newpage

\onecolumn

\title{Beyond Mixtures and Products for Ensemble Aggregation: \\ A Likelihood Perspective on Generalized Mean
\\ (Supplementary Material)}
\maketitle

\appendix

This appendix complements the main paper with methodological details, supplementary illustrations, and additional theoretical derivations:
\begin{itemize}
    \item Appendix~\ref{app:experimental details} presents detailed experimental settings corresponding to the experiments of \Cref{section: experiments}.
    
    \item Appendix~\ref{app: additional gaussian} explores a broader range of behaviors of generalized means in the Gaussian setting, extending the illustrative example of \Cref{fig:main figure}.
    
    \item Appendix~\ref{app: proof counterexample} provides theoretical derivations of the ``wisdom of crowds'' counterexamples introduced in \Cref{section: failure cases cex}.
    
    \item Appendix~\ref{app: analytical expressions} derives analytical expressions for the normalization constant in the $\R^d$ Gaussian case with $k \geq 1$ densities.
\end{itemize}

\section{Experimental details}\label{app:experimental details}

We describe the experiments of \Cref{section: experiments} in greater detail, covering both the experimental setup and the numerically stable computation procedures. We will provide the code upon acceptance.

\subsection{Datasets and models}

We adopt three datasets, each paired with an architecture suited for the associated classification task.

We use CIFAR-100 \citep{krizhevsky2009learning}, a benchmark dataset of 60,000 color images of size $32\times32$ spanning 100 object classes, with only 600 images per class. The classes cover a wide variety of natural objects, including animals, vehicles, household items, and everyday scenes. Our training set contains 50k images. We train WideResNet-28-10 models \citep{zagoruyko2016wide}, where 28 denotes the network depth and 10 the widening factor controlling the number of channels. It is a popular architecture for small-resolution image classification tasks.

We also evaluate DermaMNIST \citep{yang2023medmnist}, derived from the HAM10000 collection \citep{tschandl2018ham10000}, a medical imaging dataset of dermatoscopic skin lesion images categorized into seven diagnostic classes. The dataset is relatively small and strongly class-imbalanced: benign melanocytic nevi represents about two thirds of samples while several classes account for only 1–5\%. We adopt the official dataset split ($\approx$ 7k train / 1k val / 2k test). We use the standard $28\times28$ resolution version and adopt a ResNet-18 architecture, following the experimental setup recommended by the MedMNIST benchmark \citep{yang2023medmnist}.

We finally evaluate on the IMDb movie reviews dataset \citep{maas2011learning}, a large-scale natural language corpus of 50,000 user-written reviews from the Internet Movie Database, where the task consists in classifying reviews as positive or negative. We train lightweight Transformer-based text classifiers composed of token embeddings, sinusoidal positional encoding, and a small Transformer encoder followed by mean pooling and a linear classification head. We use a 35k/7.5k/7.5k train/validation/test split.

\subsection{Training the ensemble}

We train five independent Deep Ensembles \citep{lakshminarayanan2017simple} per experiment, each composed of ten models trained with cross-entropy loss. Models within each ensemble are trained independently with different random initializations and data shuffling. The best checkpoints are selected based on validation performance. Reported means and standard deviations are computed across the five ensembles to assess statistical stability.

\subsection{Evaluating the generalized mean for classification}

In this section, we explain how generalized mean aggregation is implemented on softmax outputs for likelihood evaluation in classification.

In this setting, each model produces a predictive distribution over $L$ classes. 
Given $k$ predictors, we denote by
\begin{equation}
p^{(i)}(y=\ell \mid x), \qquad i=1,\dots,k,
\end{equation}
the probability assigned to class $\ell$ for an input $x$. In this case, averaging likelihoods amounts to aggregating the predicted class probabilities themselves. We therefore apply the generalized (power) mean of order $r$ across model predictions. 
For each class $y$, we define
\begin{equation}
M_{k,r}(y \mid x)
=
\begin{cases}
\left(
\frac{1}{k}
\sum_{i=1}^{k}
\big(p^{(i)}(y \mid x)\big)^r
\right)^{\frac{1}{r}}, & r \neq 0, \\[6pt]
\exp\!\left(
\frac{1}{k}
\sum_{i=1}^{k}
\log p^{(i)}(y \mid x)
\right), & r=0 .
\end{cases}
\end{equation} Because this operation is applied independently to each class, the resulting scores do not necessarily sum to one. We therefore follow the necessary normalization step across classes to obtain a valid predictive distribution,
\begin{equation}
\compose{p}_{k,r}(y \mid x)
=
\frac{M_{k,r}(y \mid x)}
{\sum_{y=1}^{L} M_{k,r}(y \mid x)}.
\end{equation}

\paragraph{Stable log-domain implementation.}
Direct computation in probability space may lead to numerical instability due to extremely small softmax probabilities and the normalization step involving sums of exponentials. We therefore perform aggregation in the log-domain using numerically stable $\mathrm{logsumexp}$ operations (e.g., \texttt{torch.logsumexp}\footnote{Implemented using the PyTorch library \citep{paszke2019pytorch}.}).

For finite $r \neq 0$, the generalized mean of predictions writes
\begin{equation}
\log M_{k,r}(y \mid x)
=
\frac{1}{r}
\left(
\log
\sum_{i=1}^{k}
\exp\!\big(r \log p^{(i)}(y \mid x)\big)
-
\log k
\right),
\end{equation}
which can be implemented efficiently using a $\mathrm{logsumexp}$ operation over models.

For $r=0$, we write the geometric mean as
\begin{equation}
\log M_{k,0}(y \mid x)
=
\frac{1}{k}
\sum_{i=1}^{k}
\log p^{(i)}(y \mid x).
\end{equation}

The predictive distribution is finally obtained by exponentiating and normalizing across classes,
\begin{equation}
\compose{p}_{k,r}(y \mid x)
=
\exp\!\big(
\log M_{k,r}(y \mid x) - \log Z(x)
\big),
\end{equation}
where the normalization constant is
\begin{equation}
\log Z(x)
=
\log
\sum_{y}
\exp\!\big(
\log M_{k,r}(y \mid x)
\big).
\end{equation}

\section{Visual experiments on the Gaussian setting
}\label{app: additional gaussian}

In this section, we analyze the continuous setting (that would correspond to regression) to illustrate how both expert configuration and data dispersion influence the effect of the power parameter $r$ of $\compose{p}_{k,r}$ on the negative log-likelihood.

We consider symmetric Gaussian experts $p^{(1)}$ and $p^{(2)}$ of means $\mu_1$ and $\mu_2 = -\mu_1$ respectively with fixed standard deviation $1.8$, and evaluate aggregation performance on samples drawn from a centered Gaussian distribution with standard deviation $\sigma$ (with $\sigma=4$ corresponding to \Cref{fig:main figure}). For each setting, the NLL is estimated over several tens of thousands of samples. By varying $\sigma$ and the separation between experts, we clarify the mechanisms governing the different aggregation regimes.

\subsection{How data concentration shapes aggregation performance}
\begin{figure}[t]
    \centering
    
    \includegraphics[width=0.75\linewidth]{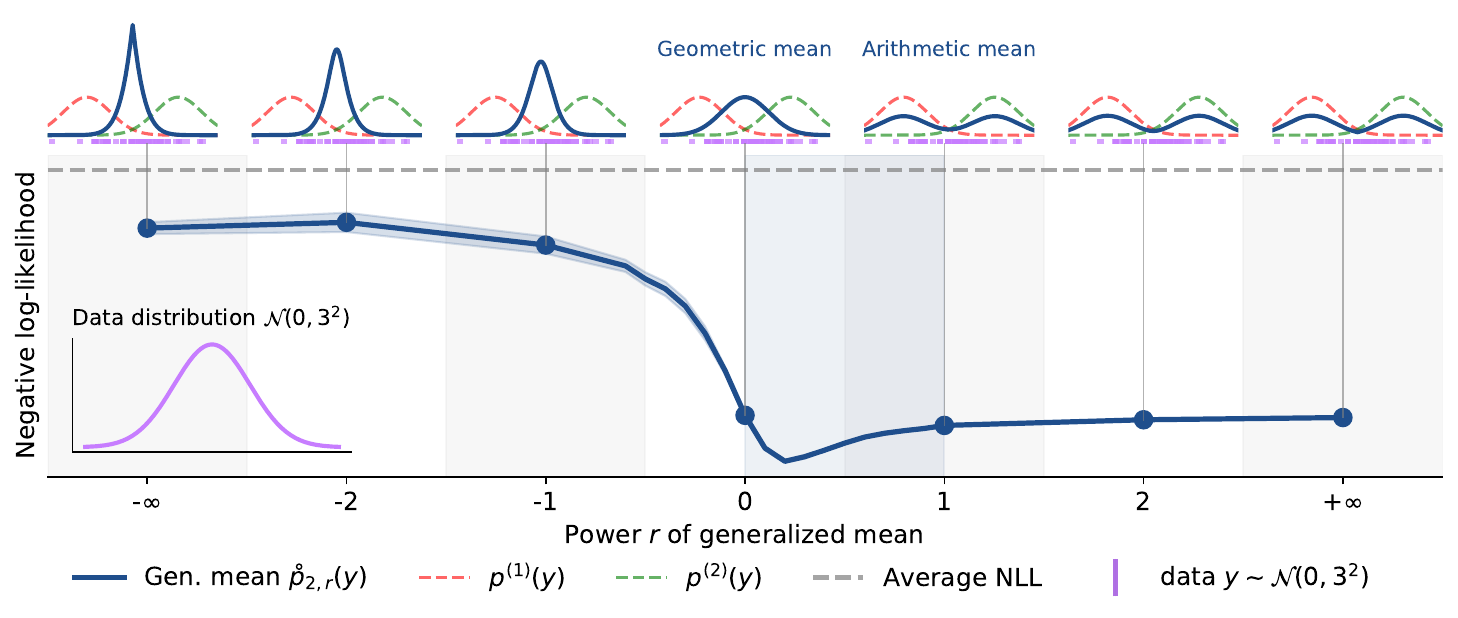}
    
    \vspace{0.5em}
    \includegraphics[width=0.75\linewidth]{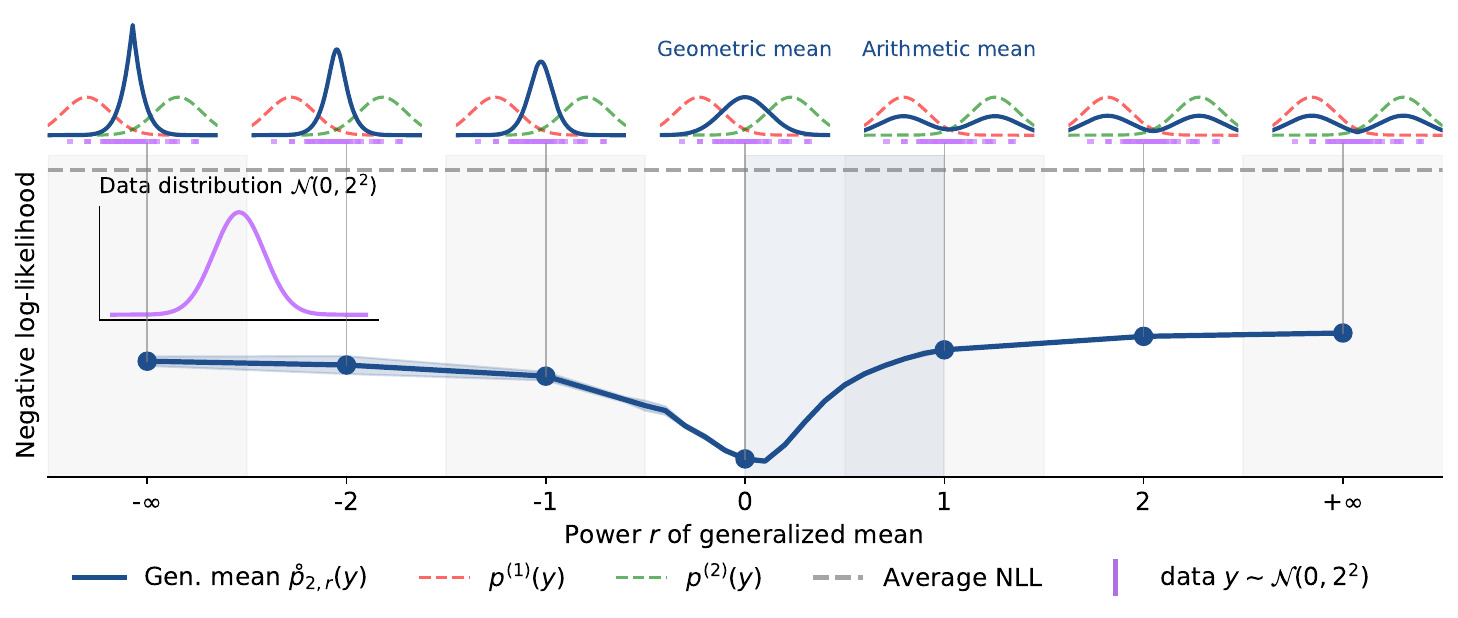}
    
    \vspace{0.5em}
    \includegraphics[width=0.75\linewidth]{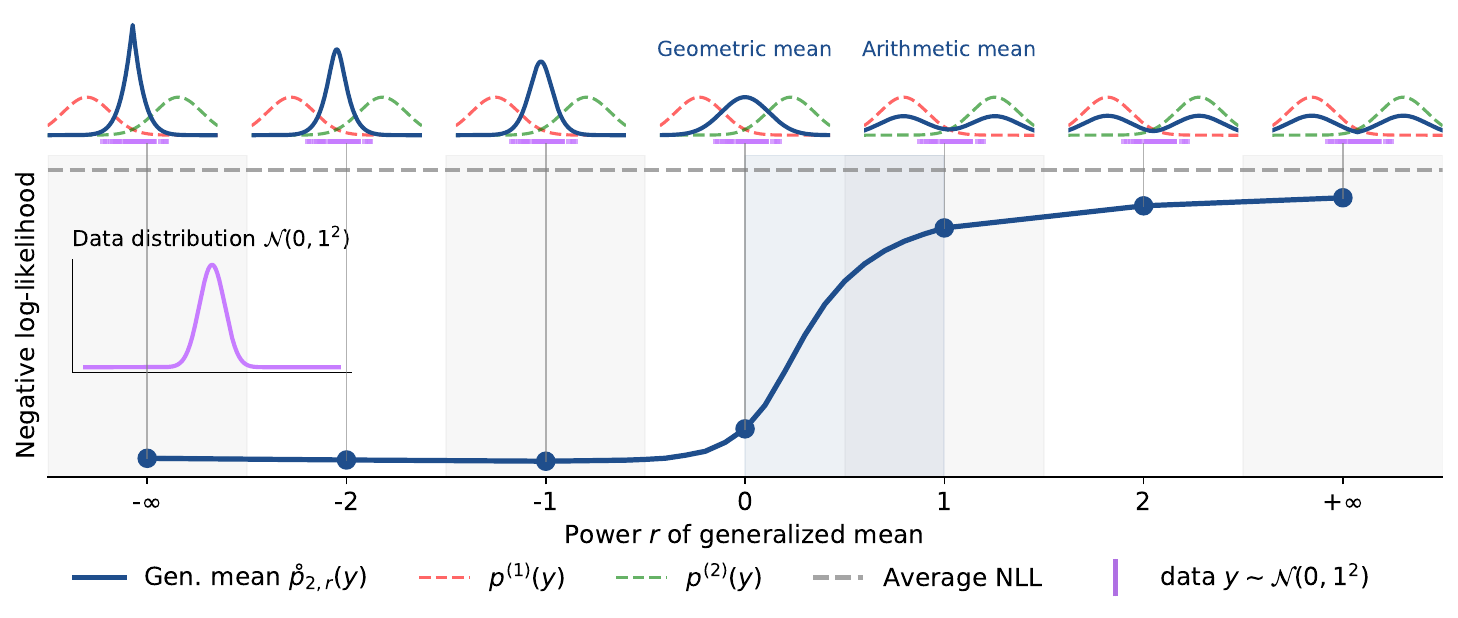}
    
    \vspace{0.5em}
    \includegraphics[width=0.75\linewidth]{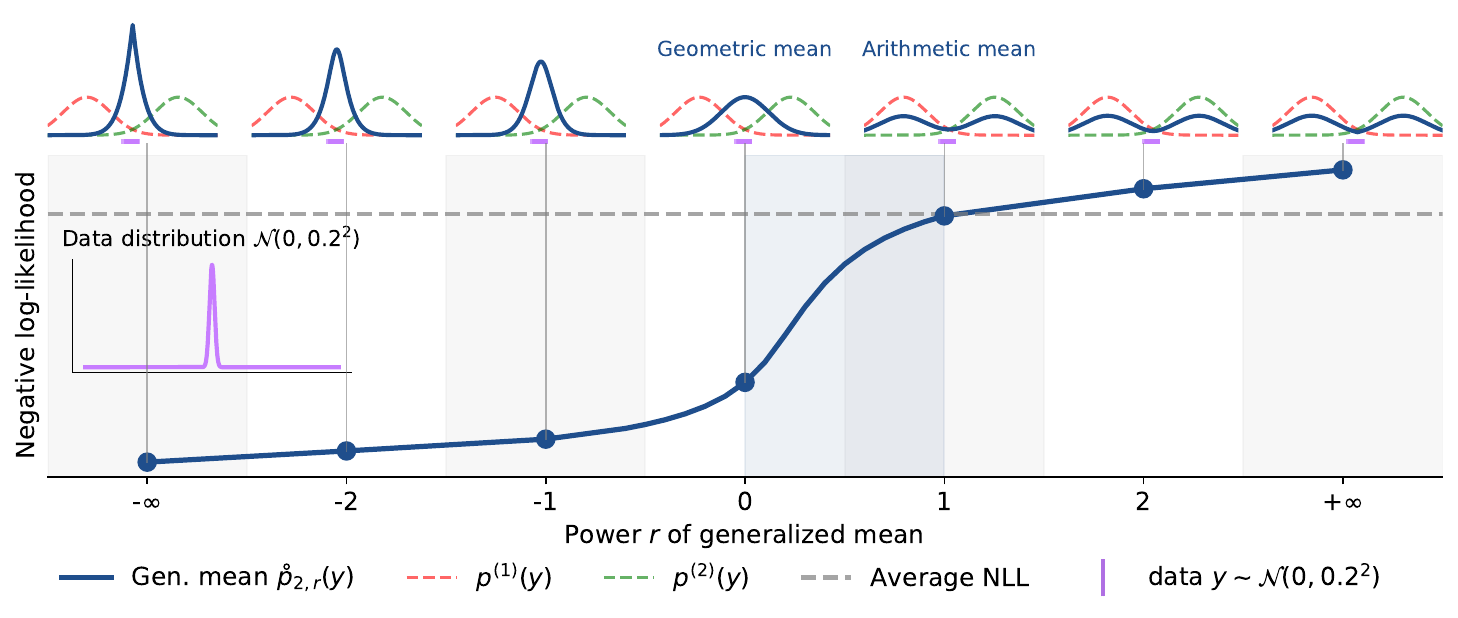}
    
    \caption{
    \textbf{Effect of decreasing $\sigma$ on aggregation behavior.} 
    From top to bottom: large ($\sigma = 3$), moderate ($\sigma = 2$), small ($\sigma = 1$), and very small $\sigma$ ($\sigma = 0.2$). 
    The U-shaped behavior first sharpens, then all $r$ outperform the individual baseline for intermediate $\sigma$, before the trend reverses for very small $\sigma$.
    }
    \label{fig:gaussian_sigma_regimes}
\end{figure}
In this section, we show that concentration of data around agreement points can cause the maximum aggregation to underperform, whereas dispersion of data across the experts’ supports can exacerbate the shortcomings of minimum-type aggregation. 

In \Cref{fig:gaussian_sigma_regimes}, decreasing $\sigma$ (from $\sigma=4$ configuration in \Cref{fig:main figure}) first accentuates the U-shaped behavior. For intermediate values of $\sigma$ ($\{ 3,2,1\}$), all orders $r$ outperform the individual baseline (dashed line). When $\sigma$ becomes very small ($\sigma = 0.2$), the trend reverses: the curve retains a similar structure but with inverted ordering, and minimum-type aggregation underperforms the individual likelihoods. This effect arises because $\sigma$ governs whether samples predominantly fall in disagreement regions (penalizing min-type aggregation) or agreement regions (penalizing max-type aggregation when $p^{(1)} \neq p^{(2)}$), with intermediate values providing a balanced trade-off.

\subsection{How expert separation shapes aggregation performance}

In this section, we show that getting the experts mean closer to each other in comparison to \Cref{fig:main figure} (where $\mu_1 = 3.5$) reduce the effect of power means since $p^{(1)} \approx p^{(2)}$, and putting the experts more separated correct the problem from \Cref{fig:main figure} encountered on negative values of $r$.

In \Cref{fig:expert_separation_regimes}, when $\mu_1$ and $\mu_2$ are close, the effect of the generalized mean is limited and slightly negative for small $r$, as test samples predominantly lie in disagreement regions. Increasing the separation between $\mu_1$ and $\mu_2$ reduces these disagreement regions and concentrates samples around the mode of the normalized generalized mean.
\begin{figure}[t]
    \centering
    
    \includegraphics[width=0.75\linewidth]{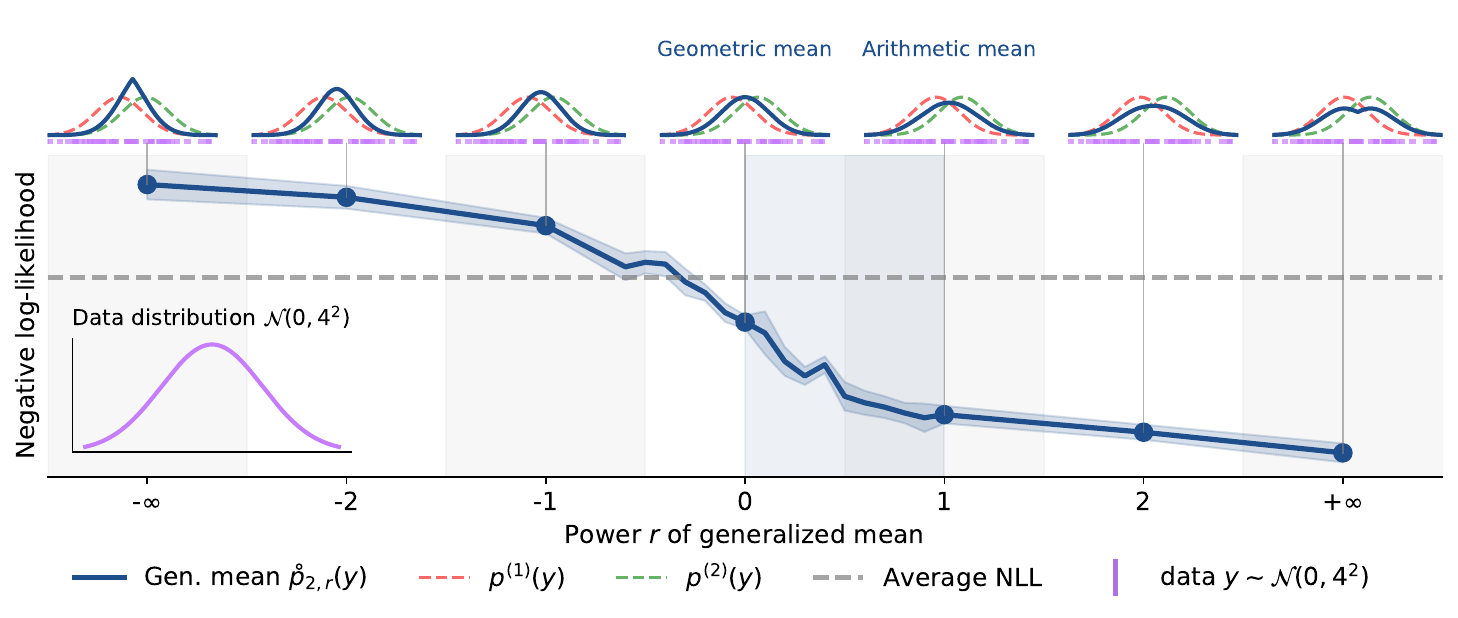}
    
    \vspace{0.6em}
    \includegraphics[width=0.75\linewidth]{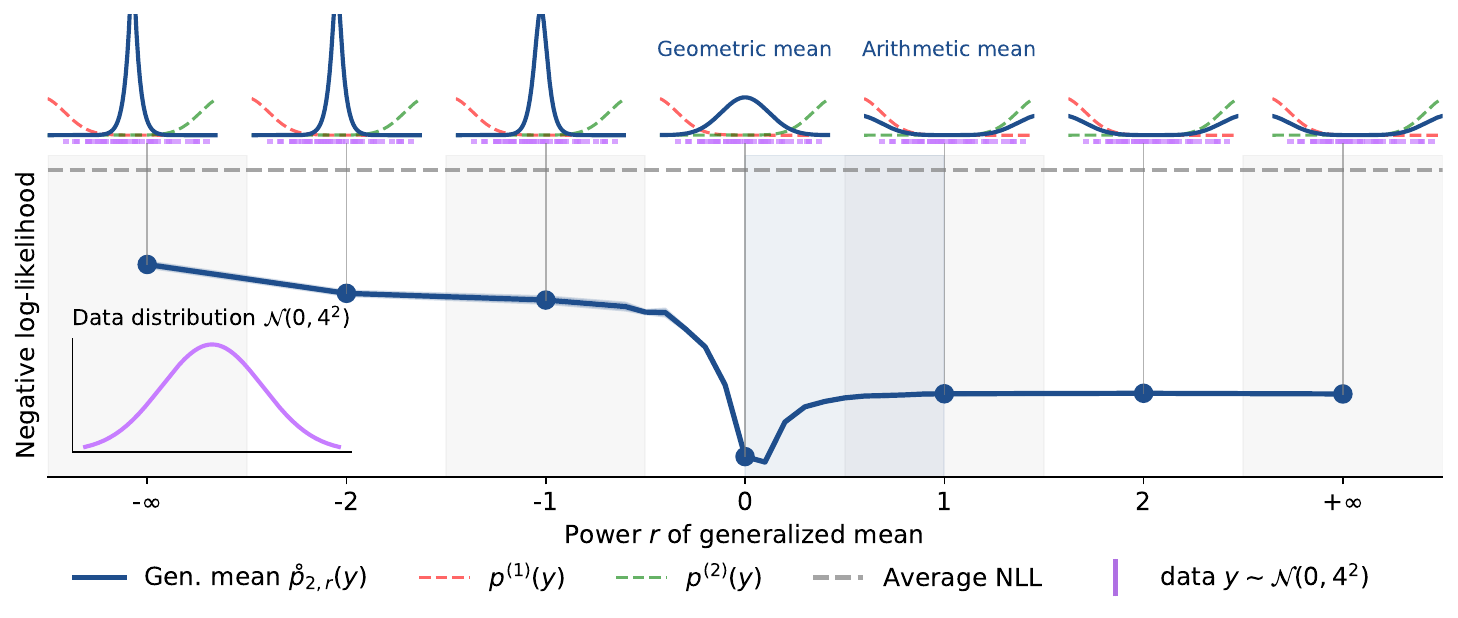}
    
    \vspace{0.6em}
    \includegraphics[width=0.75\linewidth]{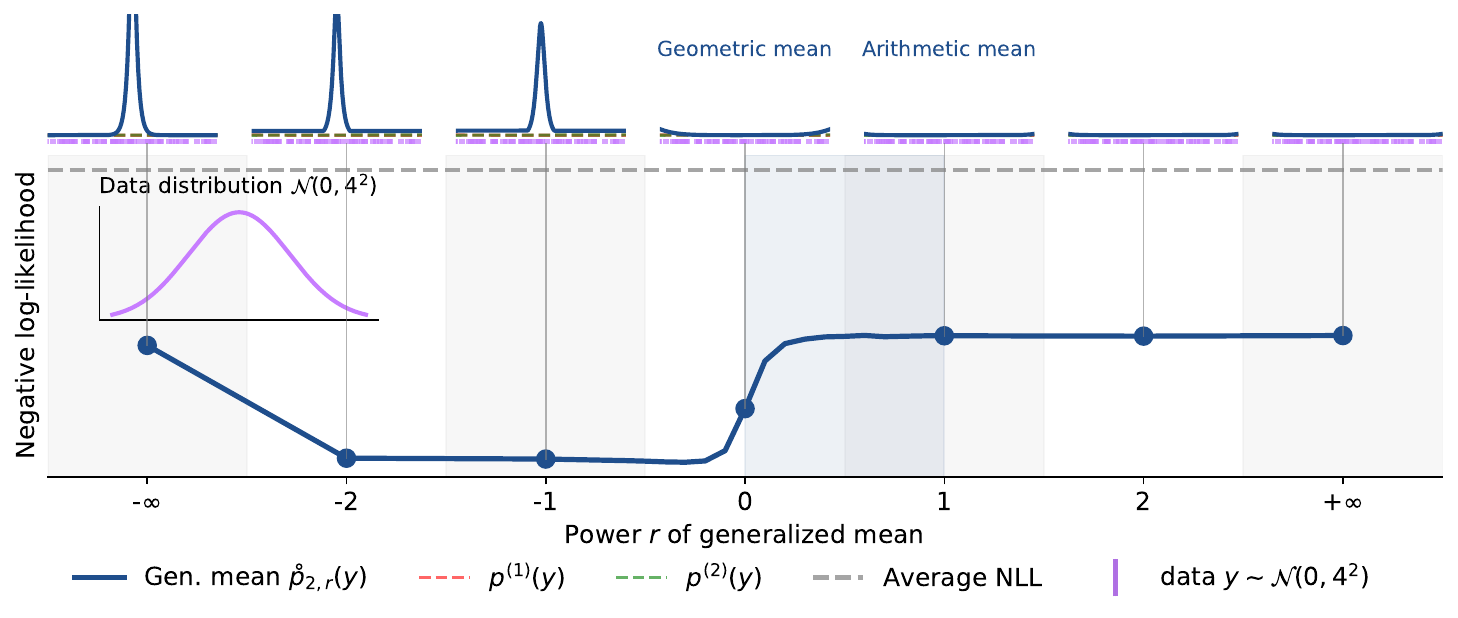}
    
    \caption{
    \textbf{Effect of $\mu_1$ and $\mu_2$ on aggregation performance.}
    From top to bottom: merged experts, moderately separated experts, and highly separated experts. When $\mu_1$ and $\mu_2$ are close, disagreement regions are frequent, which explains the poor performance of min-like aggregations. As the means separate, this effect diminishes as test samples concentrate on the modes of the generalized mean.
    }
    \label{fig:expert_separation_regimes}
\end{figure}

\section{Failure of the ``wisdom of crowds'': Counter-examples}\label{app: proof counterexample}
We provide counter-example in both continuous settings with Gaussian densities, and discrete settings (see \Cref{section: failure cases cex}). While the former examines general regimes outside the reliability interval ($r<0$ and $r>1$), the latter focuses on the extreme aggregation cases corresponding to the minimum and maximum operators.

\subsection{Symmetric Gaussian Densities}\label{app: symmetric gaussians counterexample}
In this section, we prove \Cref{theorem: counterexamples}. We show that we can find two positive densities
$p^{(1)}, p^{(2)} \in \mathcal{P}$ with $p^{(1)} \neq p^{(2)}$
and a point $\vx$, such that
\begin{equation}\label{eq: bad inequality likelihood appendix}
\log \compose{p}_{2,r}(\vx)<\frac{1}{2} \sum_{i=1}^2 \log p^{(i)}(\vx)
\end{equation}
when $r \notin [0,1]$.

\begin{proof}

We consider two symmetric Gaussian distributions. Let $\varphi(x):x \mapsto  (2\pi)^{-1/2} \exp(-x^2/2)$ be the standard normal density. We let $m > 0$ and we consider
\begin{equation}
p^{(1)}(x) = \varphi(x+m), 
\qquad 
p^{(2)}(x) = \varphi(x-m).
\end{equation}
which correspond to $\mathcal{N}(-m,1)$ and $\mathcal{N}(m,1)$ respectively.

To construct counter-examples for $r>1$ and $r<0$, we consider two points of interest: respectively $x=0$, the unique point where the two densities coincide, and $x=m$, which corresponds to the mode of $p^{(2)}$ and lies in the tail of $p^{(1)}$.

\paragraph{The aggregation breaks down at disagreement points when $r < 0$.} We show that if $r<0$ and $x=m$, Jensen's inequality in \Cref{eq: jensen inequality r positive} (and thus \Cref{eq: inequality log power mean}) can be reversed as soon as $m$ is large enough. \Cref{fig:counterexamples_gaussians_m} illustrates how negative $r$ reverses \Cref{eq: inequality log power mean} for $|x|$ and $m$ large enough. We have
\begin{equation}
p^{(2)}(m)=\varphi(0),
\qquad
p^{(1)}(m)=\varphi(2m)=\varphi(0)e^{-2m^2}.
\end{equation}

For any $r<0$, the power mean at $x=m$ satisfies
\begin{align}
M_{2,r}(m)
& =\left(\tfrac12\big(p^{(1)}(m)^r+p^{(2)}(m)^r\big)\right)^{1/r} \\
& =\varphi(0)\left(\tfrac12(1+e^{-2rm^2})\right)^{1/r}.
\end{align}
Taking logarithms and comparing with the average log-density yields
\begin{equation}
\begin{aligned}
\log M_{2,r}(m)
-\tfrac12\!\left(\log p^{(1)}(m)+\log p^{(2)}(m)\right)
\\
=\;
\frac{1}{r}\log\!\Big(\tfrac12(1+e^{-2rm^2})\Big)+m^2.
\end{aligned}
\end{equation}
Since $r<0$, the term $e^{-2rm^2}$ grows exponentially in $m$, thus  $\frac{1}{r}\log\!\Big(\tfrac12(1+e^{-2rm^2}) \Big) + m^2 \sim \frac{1}{r}\log(e^{-2rm^2}) + m^2 = -2m^2 + m^2 = -m^2$ when $m\to+\infty$, so the right-hand side tends to $-\infty$ as $m\to+\infty$.
Therefore, for $m$ large enough,
\begin{equation}
\log M_{2,r}(m)
<
\tfrac12\!\left(\log p^{(1)}(m)+\log p^{(2)}(m)\right),
\end{equation}
which contradicts \Cref{eq: jensen inequality r positive} and \Cref{eq: inequality log power mean}.

\paragraph{The generalized mean fails to improve agreement points when $r>1$.} This time we show that the failure of \Cref{eq: inequality log power mean} on $x = 0$ does not come from reversing \Cref{eq: jensen inequality r positive}, which will in fact be an equality here, but from normalization redistributing probability mass away from this point. \Cref{fig:counterexamples_gaussians_0} illustrates this phenomenon, with a concentrated degradation around this point. By using \Cref{lemma: p-mean inequality}, we have for all $r>1$
\begin{align}\label{eq: Z gaussian inequality}
Z_{2,r}
&= \int_{\R} M_{2,r}(x)\,\mathrm{d}x
> \int_{\R} M_{2,1}(x)\,\mathrm{d}x  \\
&= \int_{\R} \frac{p^{(1)}(x) + p^{(2)}(x)}{2}\,\mathrm{d}x
= 1 .
\end{align}
where \Cref{eq: Z gaussian inequality} holds since the set where $p^{(1)}$ and $p^{(2)}$ differ has strictly positive measure. Now we consider $x=0$, the symmetric point with respect to the two means $\pm m$. We have
\begin{equation}\label{eq: symmetry}
p^{(1)}(0) = p^{(2)}(0) = \varphi(m) > 0.
\end{equation}
Hence, from \Cref{eq: symmetry,eq: Z gaussian inequality} the composed density satisfies
\begin{align}
& \compose{p}_{2,r}(0)
= \frac{M_{2,r}(0)}{Z_{2,r}}
= \frac{\varphi(m)}{Z_{2,r}} \\
\implies &\log \compose{p}_{2,r}(0)
= \log \varphi(m) - \log Z_{2,r}
< \log \varphi(m).
\end{align}
Since 
\begin{equation}
\frac{1}{2} \sum_{i=1}^2 \log p^{(i)}(0) = \frac{1}{2} \sum_{i=1}^2 \log \varphi(m) =  \log \varphi(m),
\end{equation}

it contradicts \Cref{eq: inequality log power mean} and verifies \Cref{eq: bad inequality likelihood appendix}.
\end{proof}

\subsection{Counter-examples in Extreme Aggregation Regimes}
\label{appendix:extreme_counterexamples}

We provide explicit counter-examples in discrete settings  showing
that the likelihood inequality established for
$r \in [0,1]$ does not extend to the extreme aggregation
regimes $r=-\infty$ (minimum aggregation) and
$r=+\infty$ (maximum aggregation). We consider classification, where each predictor outputs a
probability vector over classes and aggregation is performed
componentwise followed by normalization.

\textbf{Failure for $r=-\infty$ at disagreement points. }Consider binary classification with labels
$y \in \{0,1\}$ and true label $y=0$.
Let two classifiers produce
\begin{equation}
p^{(1)} = (0.9,\,0.1),
\qquad
p^{(2)} = (0.01,\,0.99).
\end{equation}

Minimum aggregation selects the smallest probability assigned
to each class:
\begin{equation}
M_{2,-\infty}
=
\big(
\min(0.9,0.01),
\min(0.1,0.99)
\big)
=
(0.01,\,0.1).
\end{equation}

After normalization,
\begin{equation}
\compose{p}_{2,-\infty}
=\frac{(0.01,\,0.1)}{0.01+0.1} = (0.0909,\,0.9091).
\end{equation}

The aggregated log-likelihood of the true label is therefore
\begin{equation}
\log \compose{p}_{2,-\infty}(y=0)
= \log(0.0909)
\approx -2.40.
\end{equation}

The average individual log-likelihood equals
\begin{equation}
\frac{1}{2}
\sum_{i=1}^{2}
\log p^{(i)}(y=0)
=
\frac{1}{2}
\big(
\log 0.9 + \log 0.01
\big)
\approx -2.36.
\end{equation}

Hence,
\begin{equation}
\log \compose{p}_{2,-\infty}(0)
<
\frac{1}{2}
\sum_{i=1}^{2}
\log p^{(i)}(0),
\end{equation}
which contradicts the ``wisdom of crowds'' principle from \Cref{eq: inequality log power mean}.

\textbf{Failure for $r=+\infty$ at agreement points. }
Consider three-class classification with labels
$y \in \{0,1,2\}$ and true label $y=0$.
Let two classifiers produce
\begin{equation}
p^{(1)} = (0.9,\,0.03,\,0.07),
\qquad
p^{(2)} = (0.9,\,0.07,\,0.03).
\end{equation}

Maximum aggregation selects the largest probability assigned
to each class:
\begin{equation}
M_{2,+\infty}
=
\big(
\max(0.9,0.9),
\max(0.03,0.07),
\max(0.07,0.03)
\big)
=
(0.9,\,0.07,\,0.07).
\end{equation}

After normalization,
\begin{equation}
\compose{p}_{2,+\infty}
=
\frac{(0.9,\,0.07,\,0.07)}{0.9+0.07+0.07}
=
(0.865,\,0.067,\,0.067).
\end{equation}

The aggregated log-likelihood of the true label is therefore
\begin{equation}
\log \compose{p}_{2,+\infty}(y=0)
=
\log(0.865)
\approx -0.145.
\end{equation}

The average individual log-likelihood equals
\begin{equation}
\frac{1}{2}
\sum_{i=1}^{2}
\log p^{(i)}(y=0)
=
\frac{1}{2}
\big(
\log 0.9 + \log 0.9
\big)
=
\log 0.9
\approx -0.105.
\end{equation}

Hence,
\begin{equation}
\log \compose{p}_{2,+\infty}(0)
<
\frac{1}{2}
\sum_{i=1}^{2}
\log p^{(i)}(0),
\end{equation}
which again contradicts the log-likelihood inequality from \Cref{eq: inequality log power mean}.

\section{Analytical Expressions of Integrals: Generalized Means of Gaussians}\label{app: analytical expressions}

In this section, we derive analytical expressions for the normalizing constant of the generalized mean for $r \in \mathbb{R}$. We show that closed-form formulas can be obtained within the reliability interval $[0,1]$ (\Cref{section: when provably reliable}), namely for $r=0$ and for values of $r$ which are reciprocals of positive integers. Note that the case $r=1$ is trivial, as the normalizing constant equals one; and for the case $r=0$ the resulting density has been extensively derived \citep{cao2014generalized,cohen2021healing}, notably showing that the weighted geometric mean remains Gaussian. On the other hand, we show that for other values of $r$, including those outside $[0,1]$, the integral do not admit finite closed-form expressions althrough we know that it is finite (\Cref{proposition: generalized p-mean well defined}). This further emphasizes that the interval $[0,1]$ constitutes not only a theoretical reliability regime (\Cref{theorem: log-likelihood inequality}) but also the unique domain where some aggregation retains analytical tractability, whereas moving beyond it provides no comparable analytical or practical advantage.

\paragraph{Motivation in regression.}
In probabilistic regression, Gaussian densities are typically employed as likelihood functions. Studying the generalized mean of Gaussian densities is therefore of practical interest: if the corresponding normalizing constant is analytically tractable, aggregation directly induces a valid Gaussian-based regression model and tractable model likelihood without requiring numerical integration or approximation schemes, except potentially at prediction time where computing the mean still involves an integral.

\paragraph{Notations and goal.}
Let $p^{(1)},\dots,p^{(k)}$ be Gaussian densities in $\mathcal{X} = \R^d$ with $d \geq 1$. Formally, we consider for all $\vx \in \mathcal{X}$
\begin{equation}
p^{(i)}(\vx)
=\frac{1}{(2\pi)^{d/2}\,|\Sigma_i|^{1/2}}
\exp\!\left(
-\frac{1}{2}
(\vx-\mu_i)^\top
\Sigma_i^{-1}
(\vx-\mu_i)
\right),
\qquad i=1,\dots,k,
\end{equation}
where $\mu_i \in \R^d$ denotes the mean vector and $\Sigma_i \in \R^{d\times d}$ is a symmetric positive definite covariance matrix (we write $\Sigma_i  \succ 0$).

The goal is to determine 
\begin{equation}
Z_{k,r} = \int \left( \frac{1}{k} \sum_{i=1}^k p^{(i)}(\vx)^r \right)^{\frac{1}{r}} \vd\vx.
\end{equation}

We distinguish two cases. First, we consider tractable integrals, which correspond to $r=0$ (geometric mean) and $r=1/n$ with $n\in\mathbb{N}\setminus{\{0\}}$, covering a countable subset of the safe interval $[0,1]$. Then, we look at the values $r \in \R$ which do not satisfy the former conditions and do not correspond to tractable integrals.

\subsection{Tractable integrals} \label{app: tractable integrals}

Assume $n \in \N \setminus \{0\}$. The integral of interest is written as
\begin{equation}
Z_{k,r} = \int \left( \frac{1}{k} \sum_{i=1}^k p^{(i)}(\vx)^{\frac{1}{n}} \right)^{n} \vd\vx.
\end{equation}

This formulation allows us to directly apply the following multinomial theorem, defined as follows.
\begin{lemma}[\textbf{Multinomial theorem}]\label{lemma: multinomial}
Let $k$ be a positive integer and $n$ be a non-negative one. 
Let $a_1,\dots,a_k \in F$ where $F$ is a field (e.g., $F=\R$). Then,
\begin{equation}\label{eq: multinomial}
(a_1 + \cdots + a_k)^n
=
\sum_{\substack{n_1+\cdots+n_k=n \\ n_i \ge 0}}
\binom{n}{n_1,\dots,n_k}
\prod_{i=1}^{k} a_i^{\,n_i},
\end{equation}
where the multinomial coefficient is defined as
\begin{equation}
\binom{n}{n_1,\dots,n_k}
=
\frac{n!}{n_1!\cdots n_k!}.
\end{equation}
\end{lemma}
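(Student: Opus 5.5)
The statement to be proved is the multinomial theorem (\Cref{lemma: multinomial}). The plan is induction on the number of summands $k$, using the binomial theorem as the engine. The binomial theorem itself will be proved by induction on $n$. That proof uses only commutativity and distributivity, so it holds in any field $F$, and in fact in any commutative ring.

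\textbf{Base cases.} For $k=1$, the only composition of $n$ is $n_1=n$. The coefficient $\binom{n}{n}=1$, so both sides equal $a_1^n$. For $n=0$ and any $k$, the only term is $n_1=\dots=n_k=0$. Its coefficient is $0!/(0!\cdots 0!)=1$ and the empty product is $1$, matching $(a_1+\cdots+a_k)^0=1$.

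\textbf{Binomial case $k=2$.} Induct on $n$. Write $(a+b)^{n+1}=(a+b)(a+b)^n$ and expand the right factor by the induction hypothesis. Distribute, shift the summation index in one of the two resulting sums, and combine terms using Pascal's rule
\[
\binom{n}{j-1}+\binom{n}{j}=\binom{n+1}{j},
\]
which is verified directly from the factorial formula.

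\textbf{Inductive step in $k$.} Assume the theorem for $k-1$ summands. Set $b=a_1+\cdots+a_{k-1}$ and apply the binomial case to $(b+a_k)^n$:
\[
(b+a_k)^n=\sum_{m+n_k=n}\binom{n}{m}\,b^{m}a_k^{n_k}.
\]
Expand each $b^m$ by the induction hypothesis as a sum over $n_1+\cdots+n_{k-1}=m$. Pairs consisting of $(m,n_k)$ and a composition $(n_1,\dots,n_{k-1})$ of $m$ correspond bijectively to compositions $(n_1,\dots,n_k)$ of $n$, with $m$ recovered as $n-n_k$. The product of coefficients telescopes:
\[
\frac{n!}{m!\,n_k!}\cdot\frac{m!}{n_1!\cdots n_{k-1}!}=\frac{n!}{n_1!\cdots n_k!}.
\]
This gives the claimed formula.

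\textbf{Main obstacle.} The only delicate point is the reindexing of the double sum into a single sum over compositions of $n$. I would state the bijection explicitly, as above, so that each multi-index is counted exactly once and none is omitted.
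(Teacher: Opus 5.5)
Your argument is correct and complete. It is the standard proof: the binomial theorem by induction on $n$ via Pascal's rule, then induction on $k$ using the explicit bijection between pairs $\bigl((m,n_k),(n_1,\dots,n_{k-1})\bigr)$ and compositions of $n$. The paper gives no proof of this lemma and simply invokes it as a classical fact in the proof of \Cref{proposition: z for 1 over n}, so there is no competing route to compare.

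One precision is worth adding for a general field $F$. Since $n!$ can vanish in positive characteristic, you should compute the multinomial coefficient, Pascal's rule and your telescoping identity in $\mathbb{Z}$ and then map them into $F$. You should also fix the convention $a^0=1$, including for $a=0$.
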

Mainly from \Cref{eq: multinomial}, we can derive the proposition below.
\begin{proposition}\label{proposition: z for 1 over n} Let $p^{(1)},\dots,p^{(k)}$ be $k \geq 1$ Gaussian densities of parameters $(\mu_i,\Sigma_i)$. If $r = \frac{1}{n}$ with $n \in \N \setminus \{0\}$, we have
\begin{equation}
\begin{aligned}
Z_{k,r}
&= \frac{1}{k^n}\sum_{\substack{n_1+\cdots+n_k=n \\ n_i \ge 0}} 
\frac{n!}{n_1!\cdots n_k!}
(2 \pi)^{\frac{d}{2}}
\Big| \sum_{i=1}^k \frac{n_i}{n} \Sigma_i^{-1} \Big|^{-\frac{1}{2}}
\prod_{i=1}^k (2\pi)^{-\frac{d n_i}{2n}}
| \Sigma_i |^{-\frac{n_i}{2n}} \\
&\quad \cdot
\exp\Bigg(
\frac{1}{2}
\Big(\sum_{i=1}^k \frac{n_i}{n} \mu_i^\top \Sigma_i^{-1}\Big)
\Big(\sum_{i=1}^k \frac{n_i}{n} \Sigma_i^{-1}\Big)
\Big(\sum_{i=1}^k \frac{n_i}{n} \Sigma_i^{-1} \mu_i\Big)
-\frac{1}{2}
\sum_{i=1}^k \frac{n_i}{n} \mu_i^\top \Sigma_i^{-1} \mu_i
\Bigg).
\end{aligned}
\end{equation}
If $r=0$, 
\begin{equation}
\begin{aligned}
Z_{k,r} = & (2 \pi)^{\frac{d}{2}}
\Big| \sum_{i=1}^k \frac{1}{k} \Sigma_i^{-1} \Big|^{-\frac{1}{2}}
\prod_{i=1}^k (2\pi)^{-\frac{d}{2k}}
| \Sigma_i |^{-\frac{1}{2k}} \\
&\quad \cdot \exp\Bigg(
\frac{1}{2}
\Big(\sum_{i=1}^k \frac{1}{k} \mu_i^\top \Sigma_i^{-1}\Big)
\Big(\sum_{i=1}^k \frac{1}{k} \Sigma_i^{-1}\Big)
\Big(\sum_{i=1}^k \frac{1}{k} \Sigma_i^{-1} \mu_i\Big)
-\frac{1}{2}
\sum_{i=1}^k \frac{1}{k} \mu_i^\top \Sigma_i^{-1} \mu_i
\Bigg).
\end{aligned}
\end{equation}

\end{proposition}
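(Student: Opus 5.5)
The plan is to expand the integrand with the multinomial theorem (\Cref{lemma: multinomial}), so that $Z_{k,r}$ becomes a finite sum of integrals of weighted geometric means of Gaussians, each of which is a Gaussian integral computable by completing the square. With $r=1/n$ and $a_i = p^{(i)}(\vx)^{1/n}$,
\begin{equation*}
M_{k,1/n}(\vx) = \frac{1}{k^n}\Big(\sum_{i=1}^k a_i\Big)^n = \frac{1}{k^n}\sum_{n_1+\cdots+n_k=n}\binom{n}{n_1,\dots,n_k}\prod_{i=1}^k p^{(i)}(\vx)^{n_i/n}.
\end{equation*}
The sum is finite and every term is nonnegative, so integration can be exchanged with summation (Tonelli). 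It then suffices to compute, for fixed weights $w_i = n_i/n$ with $\sum_i w_i = 1$ and $w_i\ge 0$, the integral $I(w) = \int \prod_i p^{(i)}(\vx)^{w_i}\,\vd\vx$.

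For $I(w)$, first factor out the constants $\prod_i (2\pi)^{-d w_i/2}|\Sigma_i|^{-w_i/2}$. The remaining exponent is $-\tfrac12\sum_i w_i(\vx-\mu_i)^\top\Sigma_i^{-1}(\vx-\mu_i)$. Set $\Lambda_w = \sum_i w_i\Sigma_i^{-1}$ and $b_w = \sum_i w_i\Sigma_i^{-1}\mu_i$. Then $\Lambda_w \succ 0$, since it is a convex combination of positive definite matrices with at least one positive weight. Expanding gives
\begin{equation*}
-\tfrac12\vx^\top\Lambda_w\vx + \vx^\top b_w - \tfrac12\sum_i w_i\mu_i^\top\Sigma_i^{-1}\mu_i .
\end{equation*}
Completing the square around $\Lambda_w^{-1}b_w$ and using $\int\exp(-\tfrac12(\vx-c)^\top\Lambda_w(\vx-c))\,\vd\vx = (2\pi)^{d/2}|\Lambda_w|^{-1/2}$ yields
\begin{equation*}
I(w) = (2\pi)^{d/2}|\Lambda_w|^{-1/2}\prod_i (2\pi)^{-d w_i/2}|\Sigma_i|^{-w_i/2}\exp\!\Big(\tfrac12 b_w^\top\Lambda_w^{-1}b_w - \tfrac12\sum_i w_i\mu_i^\top\Sigma_i^{-1}\mu_i\Big).
\end{equation*}
Substituting $w_i=n_i/n$ and summing with the multinomial coefficients and the prefactor $k^{-n}$ gives the first formula. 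Terms with $n_i=0$ are handled automatically, because the corresponding factors are simply absent (they equal $1$).

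The case $r=0$ needs no expansion. Here $M_{k,0}=\prod_i p^{(i)}(\vx)^{1/k}$, so $Z_{k,0}=I(w)$ with $w_i=1/k$ for all $i$, which is exactly the second formula. Equivalently, it is the single-term case of the computation above.

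The only delicate step is the completing-the-square bookkeeping. One must track that the quadratic form in the exponent involves $\Lambda_w^{-1}$, the inverse of the precision combination. When I check against the displayed statement, I would verify that the middle matrix factor is read as $\big(\sum_i \tfrac{n_i}{n}\Sigma_i^{-1}\big)^{-1}$, and correct the notation if it is not. Finiteness of every term, and hence of the whole sum, follows from $\Lambda_w\succ 0$, consistent with \Cref{proposition: generalized p-mean well defined}.
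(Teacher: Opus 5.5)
Your proof is correct and follows the paper's route: a multinomial expansion into weighted geometric means of Gaussians, each integrated by completing the square, with $r=0$ as the single-term case $w_i = 1/k$. Your caution about the middle factor is warranted. The correct exponent is $\tfrac12 b_w^\top \Lambda_w^{-1} b_w - \tfrac12\sum_i w_i \mu_i^\top\Sigma_i^{-1}\mu_i$, whereas the displayed statement omits the inverse; the paper's own proof makes the same slip when it rewrites $(Q_k^{-1}b_k)^\top Q_k (Q_k^{-1}b_k)$ as $b_k^\top Q_k b_k$. The case $k=n=d=1$ confirms this: there $Z$ must equal $1$, but the formula as printed gives $\exp\big(\tfrac12\mu^2(\sigma^{-6}-\sigma^{-2})\big)$.
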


\begin{proof}
From \Cref{lemma: multinomial}, we have
\begin{equation}\label{eq: from lemma d}
Z_{k,r} = \frac{1}{k^n}\sum_{\substack{n_1+\cdots+n_k=n \\ n_i \ge 0}} 
\binom{n}{n_1,\dots,n_k}
\underbrace{\int \prod_{i=1}^k p_i(\vx)^{\frac{n_i}{n}}}_{I_k}.
\end{equation}
We remark that $I_k$ is the integral of a weighted geometric mean since $\sum_{i=1}^k \frac{n_i}{n} = 1$. Let us denote the weights as $w_i = \frac{n_i}{n}$ for all $i$. For all 
$\vx \in \mathcal{X}$, we have
\begin{align}
\prod_{i=1}^k p_i(\vx)^{w_i} & = \prod_{i=1}^k (2\pi)^{-\frac{dw_i}{2}}| \Sigma_i |^{-\frac{w_i}{2}} \exp \left(-\frac{1}{2} \sum_{i=1}^k w_i (\vx - \mu_i)^\top \Sigma_i^{-1} (\vx - \mu_i)\right) \label{eq: weighted geometric mean first expression} \\
& = C_k\exp \left( -\frac{1}{2} \big( \vx^\top Q_k \vx - 2 b_k^\top \vx + c_k \big)\right)
\end{align}
where
\begin{equation}\label{eq: intermediate quantities}
C_k = \prod_{i=1}^k (2\pi)^{-\frac{dw_i}{2}}| \Sigma_i |^{-\frac{w_i}{2}}, \quad Q_k = \sum_{i=1}^k w_i \Sigma_i^{-1} \succ 0, \quad b_k = \sum_{i=1}^k w_i \Sigma_i^{-1} \mu_i, \quad c_k = \sum_{i=1}^k w_i \mu_i^\top \Sigma_i^{-1} \mu_i.
\end{equation}
Completing the square allows us to factor out a term with an explicit integral. Indeed,
\begin{align}
\prod_{i=1}^k p_i(x)^{w_i} & =  C_k \exp \left( -\frac{1}{2} \big( \vx^\top Q_k \vx - 2 b_k^\top \vx \big) \right) \exp\left( -\frac{1}{2} c_k \right)\\
& =  C_k \exp \left( -\frac{1}{2} \big( \vx^\top Q_k \vx - 2 (Q_k^{-1} b_k )^\top Q_k \vx \big) \right) \exp\left( -\frac{1}{2} c_k \right) \label{eq: Q symmetric} \\
& =  C_k \exp \left( -\frac{1}{2} \big( \vx^\top Q_k \vx - 2 (Q_k^{-1} b_k )^\top Q_k \vx \big) + \underbrace{(Q_k^{-1} b_k)^\top Q_k (Q_k^{-1} b_k)}_{b_k^\top Q_k b_k} \right) \exp\left( \frac{1}{2} b_k^\top Q_k b_k -\frac{1}{2} c_k \right)\\& = C_k \exp \left( -\frac{1}{2} \big( \vx - Q_k^{-1}b_k)^\top Q_k (\vx - Q_k^{-1}b_k)\big)\right) \exp\left( \frac{1}{2} b_k^\top Q_k b_k -\frac{1}{2} c_k \right)\\
& = C_k (2 \pi)^{\frac{d}{2}} | Q_k |^{-\frac{1}{2}} \,\, \mathcal{N}(\vx; \, Q_k^{-1}b_k, Q_k^{-1}) \, \exp\left( \frac{1}{2} b_k^\top Q_k b_k -\frac{1}{2} c_k \right)
\end{align}
where \Cref{eq: Q symmetric} follows from the fact that $Q_k$ is invertible and symmetric. Using $\int\mathcal{N}(\vx; \, Q_k^{-1}b_k, Q_k^{-1}) \vd \vx =1 $, we have
\begin{equation}\label{eq: integral of product}
\int \prod_{i=1}^k p_i(\vx)^{w_i} \vd \vx = (2 \pi)^{\frac{d}{2}} | Q_k |^{-\frac{1}{2}}\prod_{i=1}^k (2\pi)^{-\frac{dw_i}{2}}| \Sigma_i |^{-\frac{w_i}{2}} \exp\left( \frac{1}{2} b_k^\top Q_k b_k -\frac{1}{2} c_k \right)
\end{equation}
where $Q_k,b_k,c_k$ are defined in \Cref{eq: intermediate quantities}. We finally use \Cref{eq: from lemma d} to prove the first result of \Cref{proposition: z for 1 over n}.

For the case $r=0$, we just apply the same reasoning from \Cref{eq: weighted geometric mean first expression} leading to \Cref{eq: integral of product} with $w_i = \frac{1}{k}$.
\end{proof}

\subsection{Non trivial integrals}\label{not tractable integrals}

As soon as $r$ does not belong to the cases studied in \Cref{app: tractable integrals}, and in particular when $r \notin [0,1]$, we cannot derive explicit integrals $Z_{k,r}$ with a finite sum like before. 

To see this, we consider the easy case $k=2$, and assume $\alpha$ is any real (or complex) number. It can be negative. Without restriction, \Cref{lemma: multinomial} is generalized through Binomial series, as stated in the following lemma

\begin{lemma}[\textbf{Binomial Series}]\label{lemma: binomial series}
Let $a_1,a_2 \in \R$. Then, if $|a| < |b|$,
\begin{equation}
(a+b)^\alpha = \sum_{i = 0}^\infty \binom{\alpha}{i} a^k b^{\alpha-k}. 
\end{equation}
where the generalized binomial coefficient is defined by
\begin{equation}
\binom{\alpha}{i} = \frac{\alpha (\alpha-1)(\alpha-2)\dots (\alpha-i-1)}{i!}
\end{equation}
\end{lemma}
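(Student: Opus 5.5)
We will read the statement with the obvious corrections: the summand is $\binom{\alpha}{i} a^{i} b^{\alpha-i}$, and the coefficient is $\binom{\alpha}{i}=\alpha(\alpha-1)\cdots(\alpha-i+1)/i!$. We also take $b>0$, so that $b^{\alpha}$ and $b^{\alpha-i}$ are unambiguous real powers. This is exactly the situation in which the lemma is used, since $a$ and $b$ will be powers of positive density values. The plan is to reduce to the one-variable Newton series. Set $t=a/b$, so that $|t|<1$. Then $(a+b)^\alpha=b^\alpha(1+t)^\alpha$ and $a^i b^{\alpha-i}=b^\alpha t^i$. The claim is therefore equivalent to
\begin{equation*}
(1+t)^\alpha=\sum_{i=0}^\infty \binom{\alpha}{i} t^i, \qquad |t|<1 .
\end{equation*}

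First we check that the right-hand side, call it $f(t)$, is a power series with radius of convergence at least $1$. If $\alpha\in\N$, the coefficients vanish for $i>\alpha$ and we are back to \Cref{lemma: multinomial}. Otherwise no coefficient vanishes, and
\begin{equation*}
\Big|\binom{\alpha}{i+1}\Big/\binom{\alpha}{i}\Big|=\frac{|\alpha-i|}{i+1}\to 1,
\end{equation*}
so the ratio test gives radius exactly $1$. Hence $f$ is real-analytic on $(-1,1)$ and may be differentiated term by term.

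Next we show that $f$ satisfies the same first-order linear ODE as $(1+t)^\alpha$, namely $(1+t)f'(t)=\alpha f(t)$. Comparing the coefficients of $t^i$ on both sides, this reduces to the Pascal-type identity
\begin{equation*}
(i+1)\binom{\alpha}{i+1}+i\binom{\alpha}{i}=\alpha\binom{\alpha}{i},
\end{equation*}
which follows from $(i+1)\binom{\alpha}{i+1}=(\alpha-i)\binom{\alpha}{i}$. Then set $g(t)=f(t)(1+t)^{-\alpha}$ on $(-1,1)$. A direct computation gives $g'(t)=(1+t)^{-\alpha-1}\big[(1+t)f'(t)-\alpha f(t)\big]=0$, so $g$ is constant and equal to $g(0)=f(0)=1$. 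This yields the identity, and multiplying back by $b^\alpha$ gives the lemma.

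The main obstacle is this middle step. Term-by-term differentiation must be justified inside the radius of convergence, and the coefficient identity must be verified carefully, including the index shift at $i=0$. A subsidiary point is the sign convention for $b$. If $b<0$ were needed, I would write $a+b=-(|b|-a)$ and work only with $\alpha$ for which $(-1)^\alpha$ is defined, but I would simply restrict the statement to $b>0$, as is done here. An alternative route is Taylor's theorem with the Cauchy form of the remainder, showing that the remainder tends to $0$ for $|t|<1$. That route is more computational, so I would use the ODE argument.
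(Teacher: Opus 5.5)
Your proof is correct, and so is your reading of the statement. The summand should be $\binom{\alpha}{i}a^{i}b^{\alpha-i}$, and the numerator of the coefficient should end at $\alpha-i+1$ (as printed, $\alpha-i-1$ gives $i+2$ factors). The variables are $a,b$ rather than $a_1,a_2$. For $\alpha\notin\Z$ one also needs a convention such as $b>0$ so that $b^{\alpha}$ and $(a+b)^{\alpha}$ are real numbers. That restriction costs nothing here, since in the paper $a$ and $b$ are $\tfrac12[p^{(1)}(\vx)]^{r}$ and $\tfrac12[p^{(2)}(\vx)]^{r}$, both positive.

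The paper gives no proof of its own. It quotes the lemma as the classical Newton binomial series and uses it only to argue that $Z_{k,r}$ stops being a finite sum when $1/r$ is not a positive integer. Your route is therefore different in that it supplies a self-contained verification, via the standard uniqueness argument for $(1+t)f'=\alpha f$. Each step checks out:
\begin{itemize}
\item the ratio $|\alpha-i|/(i+1)\to 1$ gives radius $1$ when $\alpha\notin\N$;
\item the coefficient identity follows from $(i+1)\binom{\alpha}{i+1}=(\alpha-i)\binom{\alpha}{i}$, with the case $i=0$ reducing to $\binom{\alpha}{1}=\alpha$;
\item $g=f\,(1+t)^{-\alpha}$ has zero derivative on $(-1,1)$ and $g(0)=1$.
\end{itemize}

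Beyond confirming the citation, your argument makes the domain of validity explicit: strict inequality $|a|<b$, i.e.\ $t\in(-1,1)$, together with $b>0$. It also carries over unchanged to the complex $\alpha$ that the surrounding text allows. One takes $b^{\alpha}=e^{\alpha\log b}$ and applies the constancy argument separately to the real and imaginary parts of $g$.
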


From this lemma, $Z_{k,r}$ can be expressed for general 
$r \in \mathbb{R} \setminus \{0\}$ as an integral involving a binomial series, 
following the same kind of approach as in \Cref{app: tractable integrals}. 
To ensure convergence of the series expansion, the integral must be decomposed 
into two regions corresponding to the cases $p^{(1)} < p^{(2)}$ and 
$p^{(2)} < p^{(1)}$. However, unless $r^{-1}$ is a positive integer (as in \Cref{app: tractable integrals}), 
the resulting binomial expansions are no longer finite sums. Consequently, 
evaluating $Z_{k,r}$ generally requires approximation or numerical procedures, 
either for computing the integral itself or for truncating the associated 
series representation.

\end{document}